\newtheorem{example}{Example}
\newtheorem{theorem}{Theorem}
\newtheorem{proposition}{Proposition}
\newtheorem{lemma}{Lemma}
\renewcommand{\phi}{\varphi}
\renewcommand{\epsilon}{\varepsilon}
\newcommand{\Atoms}{A} 
\renewcommand{\L}{\mathcal{L}} 
\newcommand{\U}{\mathcal{U}} 
\newcommand{\W}{\mathcal{W}} 
\newcommand{\dual}[1]{\overline{#1}} 
\newcommand{\hamming}{{\mathrm{H}}} 
\newcommand{\Neutrality}{\mathtt{N}} 
\newcommand{\Flipping}{\mathtt{F}} 
\newcommand{\Addition}{\mathtt{A}} 
\newcommand{\BoB}{\mathtt{BOB}} 
\newcommand{\BoW}{\mathtt{BOW}} 
\newcommand{\BoWS}{\mathtt{BOWS}} 
\title{Surprise Minimization Revision Operators}
\author{%
    Adrian Haret
    \affiliations
    Institute for Logic, Language and Computation\\ The University of Amsterdam
    \emails
    a.haret@uva.nl    
}
\begin{document}

\maketitle

\begin{abstract}
	Prominent approaches to belief revision prescribe the adoption
	of a new belief that is as close as possible to the prior belief, 
	in a process that, even in the standard case, can be described as attempting to minimize surprise.
	Here we extend the existing model by proposing a measure of surprise, dubbed \emph{relative surprise},
	in which surprise is computed with respect not just to the prior belief, 
	but also to the broader context provided by the new information, 
	using a measure derived from familiar distance notions between 
	truth-value assignments. We characterize the surprise minimization revision operator thus defined 
	using a set of intuitive rationality postulates in the AGM mould, 
	along the way obtaining representation results for other existing revision operators
	in the literature, such as the Dalal operator and a recently introduced distance-based min-max operator. 
\end{abstract}

\section{Introduction}
Belief change models rational adjustments made to an agent's epistemic state 
upon acquiring new information \cite{Peppas2008,Hansson2017,FermeH2018}.
When the new information is assumed to be reliable,
the logic of changing one's prior beliefs to accommodate such new-found knowledge 
falls under the heading of \emph{revision}. 
Belief revision is typically thought of by appeal to a set of intuitive normative 
principles, usually along the lines of the AGM framework \cite{AlchourronGM1985}, 
alongside more concrete revision representations and mechanisms 
\cite{Grove1988,Dalal1988,GardenforsM88,KatsunoM1992,Rott1992}.

A perspective underlying many of these representations, which we share here, 
is that belief revision is akin to a choice procedure 
guided by a plausibility relation over possible states of affairs:
revising a belief, in this sense, amounts to choosing the most plausible states of affairs consistent
with the new information.
Plausibility over states of affairs, in turn, is judged according to some notion of 
dissimilarity, or distance between states of affairs: 
I judge a situation to be less likely the further away from my own belief it is.
Among the various distance notions that can be used to make this intuition precise, 
the approach using Hamming distance to rank truth-value assignments is among the most prominent, 
used for the well-known Dalal revision operator \cite{Dalal1988}, 
and the more recently introduced Hamming distance min-max operator \cite{HaretW2019}. 



Both the Dalal and the Hamming distance min-max operator are designed to respond to new information
by minimizing departures from the prior belief, 
in what can be described, just as well, as an attempt to prevent major surprise:
if I have a prior belief that all major carbon emitting countries will have halved their emissions 
by the end of 2049, 
and it turns out that neither of them has, 
then I am likely to be surprised---certainly more suprised than seeing my belief confirmed.
Consequently, if I acquire information to the effect that these are the only two possible outcomes
(i.e., either all countries cut emissions, or none of them does), 
then, on the assumption that this information stems from some noisy observation of the true state,
I will use my prior belief and gravitate towards the outcome that occasions less surprise.

In this revision procedure, consistent with both the Dalal and the min-max operators, 
the measure of surprise is taken to depend only on the absolute difference between my prior belief
and the states of affairs learned to be viable.
However, we can readily imagine that the amount of anticipated surprise depends in equal measure 
on other factors, e.g., the context provided by the newly acquired information: 
if in 2049 it turns out that none of the countries has reduced emissions, 
then I am likely to be less surprised 
if I had been told in advance that at most one of them would
than if I had been told that, possibly, any number of them could achieve the target.
In other words, it is desirable to have a broader notion of surprise complementing the absolute one, 
to account for situations in which change in the epistemic state 
depends not just on the prior belief but also on the range of options provided by the new information.
However, despite the fact that surprise minimization is a natural idea
that has been gaining traction in Cognitive Science \cite{Friston2010,Hohwy2016},
there are not many belief revision policies that explicitly take it into account.

In this paper we put forward a notion of relative surprise that is richer in precisely this sense,
and leverage it to define a new type of revision operator, 
called the \emph{Hamming surprise min-max operator},
and which is calibrated to take into account contextual effects as described above.
Though it deviates from some of the postulates in the AGM framework 
(notably, \emph{Vacuity}, \emph{Superexpansion} and \emph{Subexpansion} \cite{FermeH2018}), 
we show that the Hamming surprise min-max operator shares other desirable, 
though less obvious, features with the Dalal and the Hamming distance min-max operator.
Significantly, we use these features to fully characterize the newly introduced surprise operator, 
in the process obtaining full chacterizations for the Dalal and Hamming distance min-max operators.

\paragraph{Contributions.}
On a conceptual level, we argue that the notion of distance standardly used to define revision operators
can be seen as quantifying a measure of surprise, 
with different distance-based operators providing different ways to minimize it.
We then enrich this landscape by introducing a notion of \emph{relative} surprise, 
which is then put to use in defining the Hamming surprise min-max operator. 
We compare this operator against the standard KM postulates for revision 
\cite{KatsunoM1992} and present new postulates that complement the KM ones, 
for a full characterization. 
The versatility of the ideas underlying these postulates is showcased
by adapting them to the Dalal and Hamming distance min-max operators: 
in the case of the min-max operator our postulates complement the subset of KM postulates
the operator is known to satisfy;
in the case of the Dalal operator our postulates strengthen the KM postulates.
In both cases, we obtain full characterizations. 

\paragraph{Related work.}
Among belief revision operators that are insensitive to syntax, 
the Dalal operator has received a significant amount of attention,
either from attempts to express it by encoding the Hamming distance between truth-value assignments
at the syntactic level \cite{Val1993,PozosLP2013};
as an instance of the more general class of parameterized difference operators
\cite{PeppasW2018,AravanisPW21};
or in relation to Parikh's relevance-sensitivity axiom \cite{PeppasWCF2015}.
However, to the best of our knowledge, the characterization we offer here is the first of its kind.

Strengthening the AGM framework to induce additional desired behavior from revision operators 
has been considered in relation to issues of iterated revision \cite{DarwicheP1997}, 
or relevance sensitivity \cite{Parikh99,PeppasW2016}. 
In terms of choice rules, the closest analogue to the surprise minimization operator 
is the decision rule that minimizes maximum regret in decisions with ignorance 
\cite{Milnor1954,LaveM1993,Peterson2017}, with Hamming distances playing the role of  
utilities in our present setting. However, the logical setting and the fact that the distances depend on the states 
themselves means that decision theoretic results do not translate easily to our current framework.

\paragraph{Outline.}
Section \ref{sec:preliminaries} introduces the main notions related to propositional logic and 
belief revision that will be used in the rest of the paper, 
and argues for the surprise-based interpretations of distances,
Section \ref{sec:distance-surprise-operators} defines the relative Hamming surprise measure 
and the Hamming surprise min-max operator. 
Sections \ref{sec:dalal} and \ref{sec:distance-hamming-max} consist of a slight detour in which 
the Dalal and Hamming distance min-max operators are characterized, setting up the stage 
for the characterization of the surprise operator in Section \ref{sec:surprise-hamming-max}.
Section \ref{sec:conclusion} offers conclusions.

\section{Preliminaries}\label{sec:preliminaries}
\paragraph{Propositional Logic.}
We assume a finite set $\Atoms$ of \emph{propositional atoms},
large enough that we can always reach into it and find additional, unused atoms, 
if any are needed.
The \emph{set $\L$ of propositional formulas} is generated from the atoms in $\Atoms$
using the usual propositional connectives ($\land$, $\lor$, $\lnot$, $\rightarrow$ and $\leftrightarrow$),
as well as the constants $\bot$ and $\top$. 

An \emph{interpretation $w$} 
is a function mapping every atom in $\Atoms$ to either \emph{true} or \emph{false}.
Since an interpretation $w$ is completely determined 
by the set of atoms in $\Atoms$ it makes true,
we will identify $w$ with this set of atoms
and, if there is no danger of ambiguity, display $w$ 
as a word where the letters are the atoms assigned to true.
The \emph{universe $\U$} is the set of all interpretations 
for formulas in $\L$.
If $w_1$ and $w_2$ are interpretations, the \emph{symmetric difference $w_1\triangle w_2$ of $w_1$ and $w_2$}
is defined as $w_1\triangle w_2 = (w_1\setminus w_2)\cup (w_2\setminus w_1)$, 
i.e., as the set of atoms on which $w_1$ and $w_2$ differ.
The \emph{Hamming distance $d_\hamming\colon \U\times\U\rightarrow\mathbb{N}$} is defined, 
for any interpretations $w_1$ and $w_2$, as 
$
	d_\hamming(w_1,w_2) =|w_1\triangle w_2|.
$
Intuitively, the Hamming distance $d_\hamming(w_1,w_2)$ between $w_1$ and $w_2$
counts the number of atoms that $w_1$ and $w_2$ differ on,
and is used to quantify the disagreement between two interpretations.

The models of a propositional formula $\phi$ are the interpretations that satisfy it,
and we write $[\phi]$ for the set of models of $\phi$.
If $\phi_1$ and $\phi_2$ are propositional formulas,
we say that \emph{$\phi_1$ entails $\phi_2$}, 
written $\phi_1 \models \phi_2$, if $[\phi_1]\subseteq[\phi_2]$,
and that they are \emph{equivalent}, written $\phi_1 \equiv \phi_2$, if $[\phi_1]= [\phi_2]$.
A propositional formula $\phi$ is \emph{consistent} if $[\phi]\neq\emptyset$.
The models of $\bot$ and $\top$ are $[\bot]=\emptyset$ and $[\top]=\U$.
We will occasionally find it useful to explicitly represent the models of a formula, 
in which case we write $\phi_{v_1,\dots,v_n}$ for a propositional formula 
such that $[\phi_{v_1,\dots,v_n}]=\{v_1,\dots,v_n\}$.
A propositional formula $\phi$ is \emph{complete} if it has exactly one model,
and we will typically denote a complete formula as $\phi_v$ to 
draw attention to its unique model $v$.
The \emph{null formula $\epsilon$} and the \emph{full formula $\alpha$} 
are defined as $\epsilon = \bigwedge_{p\in\Atoms}\lnot p$
and $\alpha = \bigwedge_{p\in\Atoms} p$,
i.e., as the conjunction of the negated and non-negated atoms in $\Atoms$, respectively.
Note that $[\epsilon]=\{\emptyset\}$ and $[\alpha]=\Atoms$.

\paragraph{Distance-based belief revision.}
A \emph{revision operator $\circ$} is a function
$\circ\colon\L\times\L\rightarrow\L$,
taking as input two propositional formulas,
denoted $\phi$ and $\mu$,
and standing for the agent's prior and newly acquired 
information, respectively,
and returning a propositional formula,
denoted $\phi\circ\mu$.
Two revision operators $\circ_1$ and $\circ_2$ are \emph{equivalent}, 
written $\circ_1\equiv\circ_2$, 
if $\phi\circ_1\mu\equiv\phi\circ_2\mu$, 
for any formulas $\phi$ and $\mu$.

The primary device for generating concrete revision operators we make recourse to here is  
the Hamming distance. Thus, the \emph{Hamming distance min-min operator $\circ^{d_\hamming,\,\min}$}, 
or, as it is more commonly known, \emph{the Dalal operator} \cite{Dalal1988},
is defined, for any propositional formulas $\phi$ and $\mu$, 
as a formula $\phi\circ^{d_\hamming,\,\min}\mu$ such that:
$$
	[\phi\circ^{d_\hamming,\,\min}\mu] = \mathrm{argmin}_{w\in[\mu]}\min_{v\in[\phi]} d_\hamming(v,w).
$$
Intuitively, the shortest distance from $w$ to any model of $\phi$,
i.e., $\min_{v\in[\phi]}d_\hamming(v,w)$, can be interpreted as a measure of distance between 
$w$ and $\phi$, and we will refer to it as the \emph{Hamming min-distance between $\phi$ and $\mu$}.
The result $\phi\circ^{d_\hamming,\,\min}\mu$ of revision, then, selects those models of $\mu$
that are closest to $\phi$ according to this measure.

Recently, an alternative revision operator has been analyzed \cite{HaretW2019}: 
what we will call here the \emph{Hamming distance min-max operator $\circ^{d_\hamming,\,\max}$},
defined, for any $\phi$ and $\mu$, 
as a formula $\phi\circ^{d_\hamming,\,\max}\mu$ such that:
$$
	[\phi\circ^{d_\hamming,\,\max}\mu] = \mathrm{argmin}_{w\in[\mu]}\max_{v\in[\phi]} d_\hamming(v,w),
$$
i.e., a formula whose models are exactly those models of $[\mu]$ that 
minimize the Hamming distance to 
$\max_{v\in[\phi]}d_\hamming(v,w)$,
the \emph{Hamming max-distance between $\phi$ and $\mu$}.

\paragraph{Distance as surprise.}
Consistent with the idea that revision models the agent learning about the world around it,
we can see the new information $\mu$ as a noisy observation of some underlying ground truth state $w^\ast$:
by acquiring $\mu$, the agent learns of a set of outcomes (the models of $\mu$),
all of which stand a chance of being the true state $w^\ast$. In that sense, the distance $d(v,w)$
between any $v\in[\phi]$ and $w\in[\mu]$ stands for a quantity that can be aptly described as \emph{surprise}:
it is the difference between what the agent expects is the case ($v$)
and what might turn out to actually be the case ($w$). Naturally, the agent will want to minimize 
the divergence between its predictions and reality, with existing revision operators providing 
different means to do so.

\begin{table}\centering
	\begin{tabular}{rcccc}
		\toprule
		$d_{\hamming}$             & 
		$\emptyset$				   & 
		$abcd$ 					   & 
		$\min$					   &
		$\max$ 					   \\\midrule 
		
	    $\emptyset$				   & 
		$0$     	   			   & 
		$4$	    	   			   & 
		$\bm{0}$               		   &
		$4$       		   \\
		
	    $abcd$ 					   & 
		$4$  		     		   & 
		$0$	    				   & 
		$\bm{0}$       		   &
		$4$ 					   \\

		$abe$ 					   & 
		$3$  		     		   & 
		$3$	    				   & 
		$3$       		   		   &
		$\bm{3}$     			   \\
		\bottomrule
	\end{tabular}	
	\caption{
		Hamming distances $d_\hamming(v, w)$ 
		for $v\in[\phi]$, $w\in[\mu]$,
		with $[\phi]=\{\emptyset,abcd\}$ and $[\mu] = \{\emptyset,abcd, abe\}$.
		The lower $d_\hamming(v, w)$ is, the more plausible $w$ is considered to be, 
		from the standpoint of $v$. 
		The minimal and maximal values per model of $\mu$ are tallied on the right,
		with the values preferred by operators $\circ^{d_\hamming,\,\min}$ and $\circ^{d_{\hamming},\,\max}$,
		i.e., the minimal among the minimal and maximal values, respectively, in bold font.
	}
	\label{tab:dalal-hmax}
\end{table}

\begin{example}\label{ex:dalal-hmax}
	Consider a set $\Atoms=\{a,b,c,d,e\}$ of atoms, 
	standing for countries that might meet their emission targets before 2049,
	and formulas 
	$\phi = (\lnot a\land \lnot b\land \lnot c\land\lnot d\land \lnot e)\lor (a \land b \land c\land d\land \lnot e)$
	and $\mu = \phi\lor (a\land b\land \lnot c\land \lnot d\land e)$,
	with $[\phi] = \{\emptyset, abcd\}$ and $[\mu] = \{\emptyset,abcd,abe\}$.
	Using the Hamming distances depicted in Table \ref{tab:dalal-hmax},
	we obtain that 
	$[\phi \circ^{d_\hamming,\,\min}\mu]=\{\emptyset, abcd\}$
	and $[\phi \circ^{d_\hamming,\,\max}\mu]=\{abe\}$.

	Intuitively, we read this as saying that if an agent believes the 
	true state to be either of the worlds in $[\phi]=\{\emptyset, abcd\}$,
	but finds out it is one among 
	$[\mu] = \{\emptyset, abcd, abe\}$, 
	then $\circ^{d_\hamming,\,\min}$ selects the new belief to be $\{\emptyset,abcd\}$, 
	as this supplies the least amount of surprise in an optimistic, best best-case scenario:
	if the true state turns out to be either of $\emptyset$ or $abcd$, 
	then the agent, believing this, will be able to say ``I told you so!''; 
	the $abe$ case, which is surprising in both cases, is ignored.
	In a complementary approach, the $\circ^{d_\hamming,\,\max}$ 
	operator shifts the agent's belief to $\{abe\}$, 
	as this provides, more cautiously, the best worst-case scenario: 
	from the standpoint of both $\emptyset$ or $abcd$, 
	$abe$ seems the least risky of the other options.
\end{example}

\noindent 
Example \ref{ex:dalal-hmax} serves as a springboard for some important observations.
Firstly, it illustrates that $\circ^{d_\hamming,\,\min}$ and $\circ^{d_{\hamming},\,\max}$
are 
distinct operators.
Secondly, it is apparent from Example \ref{ex:dalal-hmax} that, given prior beliefs $\phi$, 
interpretations can be ranked according to their Hamming min- or max-distance to $\phi$.
It is straightforward to see that
($i$) in both cases the resulting rankings depend only on the models of $\phi$, are total and admit ties;
($ii$) the min-distance places models of $\phi$ at the bottom of this ranking,
i.e., as the most plausible interpretations according to $\phi$, 
in a pattern that goes under the name of a \emph{faithful ranking} \cite{KatsunoM1992};
and, perhaps, less conspicuously, that
($iii$) the max-distance places models of the so-called
\emph{dual of $\phi$} (i.e., the formula obtained from $\phi$ 
by replacing all its atoms with their negations), 
at the very top, i.e., as the least plausible interpretations according to $\phi$ \cite{HaretW2019}.
The different flavors of rankings, faithful or otherwise, generated in this distance-based approach 
usually play a prominent role in representation results for revision, as they open up a level of abstraction
between that of concrete numbers and general principles. In this work, however, we will bypass talk of rankings 
and work directly at the interface between distance-based measures and normative principles.

Finally, an observation that will prove useful is that 
we can (and will) think of the individual models $v$ of $\phi$ as generating 
their own plausibility rankings over interpretations: 
these rankings correspond to the columns in Table \ref{tab:dalal-hmax}
and are the rankings that would be generated if the prior belief were the complete formula $[\phi_v]=\{v\}$, 
i.e., what the landscape of plausibility looks like if the agent 
puts the entire weight of its belief on $\phi_v$.
Revision can then be seen as 
employing a function ($\min$ or $\max$) to aggregate the individual rankings, 
and then choosing something out of the aggregated result: 
the Dalal operator $\circ^{d_\hamming,\,\min}$ chooses, optimistically, the models that are the best of the best, 
while $\circ^{d_{\hamming},\,\max}$ chooses, pessimistically, 
the best of the worst models across the individual rankings.
In keeping with this way of looking at things, we will often speak, loosely, 
of formulas and interpretations `judging' and `choosing' among possible outcomes.

What recommends the choice behavior of operators (such as Dalal's operator) as reasonable
is adherence to a set of intuitive normative principles, or \emph{rationality postulates}. 
The most common set of such principles consists of the AGM postulates for revision \cite{AlchourronGM1985}, 
which we present here in the Katsuno-Mendelzon formulation \cite{KatsunoM1992}. 
The postulates apply for any propositional formulas $\phi$, $\mu$, $\mu_1$ and $\mu_2$: 

\begin{description}
	\item[($\mathrm{R}_{1}$)] $\phi\circ\mu\models\mu$.
  	\item[($\mathrm{R}_{2}$)] If $\phi\land\mu$ is consistent, 
    	then $\phi\circ\mu\equiv\phi\land\mu$.
	\item[($\mathrm{R}_{3}$)] If $\mu$ is consistent, then $\phi\circ\mu$ is consistent.
	\item[($\mathrm{R}_{4}$)] If $\phi_1\equiv\phi_2$ and $\mu_1\equiv\mu_2$, 
    	then $\phi_1\circ\mu_1\equiv\phi_2\circ\mu_2$.
  	\item[($\mathrm{R}_{5}$)] $(\phi\circ\mu_1)\land\mu_2\models\phi\circ(\mu_1\land\mu_2)$.
	\item[($\mathrm{R}_{6}$)] If $(\phi\circ\mu_1)\land\mu_2$ is consistent, 
    	then $\phi\circ(\mu_1\land\mu_2)\models(\phi\circ\mu_1)\land\mu_2$.
\end{description}

\noindent
The primary assumption of revision (postulate $\mathrm{R}_{1}$) 
is that new information originates with a trustworthy source; 
thus, revising $\phi$ by $\mu$ involves a commitment to accept 
the newly acquired information.
Postulate $\mathrm{R}_{2}$, known as the \emph{Vacuity} postulate, 
says that if the newly acquired information $\mu$ 
does not contradict the prior information $\phi$, the result is just the conjunction of $\mu$ and $\phi$.
Postulate $\mathrm{R}_{3}$ says that if the newly acquired information $\mu$ is consistent, 
then the revision result should also be consistent.
Postulate $\mathrm{R}_{4}$ says that the result depends only on the semantic content of the information involved.
Postulates $\mathrm{R}_{5}$ and $\mathrm{R}_{6}$, 
known as \emph{Subexpansion} and \emph{Superexpansion}, respectively, 
enforce a certain kind of coherence when the new information is presented sequentially, 
which is for present purposes best understood as akin to a 
form of \emph{independence of irrelevant alternatives} 
familiar from rational choice \cite{Sen2017}: 
the choice over two alternatives (here, interpretations $w_1$ and $w_2$ in $[\mu]$) 
should \emph{not} depend on the presence of other alternatives in the menu (here represented by new information $\mu$).


The Dalal operator $\circ^{d_\hamming,\,\min}$ satisfies postulates $\mathrm{R}_{1}$-$\mathrm{R}_{6}$
\cite{KatsunoM1992}, though these postulates do not uniquely characterize it.
The Hamming distance max-operator $\circ^{d_{\hamming},\,\max}$ 
satisfies postulates $\mathrm{R}_{1}$ and $\mathrm{R}_{3}$-$\mathrm{R}_{6}$
but not $\mathrm{R}_{2}$, though it does satisfy the following two postulates \cite{HaretW2019},
where $\dual{\phi}$ stands for the \emph{dual of $\phi$}, as defined above:
\begin{description}
	\item[($\mathrm{R}_{7}$)] If $\phi\circ\mu\models\dual{\phi}$, then $\phi\circ\mu\equiv\mu$.
	\item[($\mathrm{R}_{8}$)] If $\mu\not\models\dual{\phi}$, then $(\phi\circ\mu)\land\dual{\phi}$ is inconsistent.
\end{description}
\noindent
In certain circumstances, $\dual{\phi}$ can be thought of as the point of view opposite to that of $\phi$,
such that, taken together, postulates $\mathrm{R}_{7}$ and $\mathrm{R}_{8}$
inform the agent to believe states of affairs
compatible with $\dual{\phi}$ only if it has no other choice in the matter:
the models of $\dual{\phi}$ should be part of a viewpoint one is willing to accept
only as a last resort.

\section{Relative Hamming Surprise Minimization}\label{sec:distance-surprise-operators}
In this section we introduce our novel surprise-based operator.
We start by defining, for any interpretations $v$ and $w$, 
the (relative) \emph{Hamming surprise 
$s_\hamming^{\mu}(v,w)$ of $v$ with respect to $w$ relative to $\mu$}, as:
$$
	s_\hamming^{\mu}(v,w) = d_\hamming(v,w) - d_\hamming(v,\mu),
$$
i.e., the distance between $v$ and $w$ normalized by the distance 
between $v$ and $\mu$. The new information $\mu$, here, 
serves as the reference point, or context, relative to which surprise is calculated.
The \emph{Hamming surprise min-max operator $\circ^{s,\,\max}$} 
is defined as a formula $\phi\circ^{s_{\hamming},\,\max}\mu$ such that:
$$
	[\phi\circ^{s_\hamming,\,\max}\mu] = \mathrm{argmin}_{w\in[\mu]}\max_{v\in[\phi]}s_\hamming^{\mu}(v,w),
$$
i.e., as a formula whose models are exactly those models of $\mu$ 
that minimize maximum Hamming surprise with respect 
to $\phi$, and relative to $\mu$.
We refer to $\max_{v\in[\phi]}s_\hamming^{\mu}(v,w)$, 
as the \emph{max-surprise of $\phi$ with $w$ relative $\mu$}.

\begin{table}\centering
	\begin{tabular}{rccc}
		\toprule
		$s^\mu_{\hamming}$             & 
		$\emptyset$				   & 
		$abcd$ 					   & 
		$\max$ 					   \\\midrule 
		
	    $\emptyset$				   & 
		$0-0$     	   			   & 
		$4-0$	    	   		   & 
		$4$       		   \\
		
	    $abcd$ 					   & 
		$4-0$  		     		   & 
		$0-0$	    				   & 
		$4$ 					   \\

		$abe$ 					   & 
		$3-0$  		     		   & 
		$3-0$   				   & 
		$\mathbf{3}$      		   		   \\
		\bottomrule
	\end{tabular}	
	\caption{
		Relative Hamming surprise $s^\mu_\hamming(v, w)$ 
		for $v\in[\phi]$, $w\in[\mu]$, 
		for $[\phi]=\{\emptyset,abcd\}$, 
		$[\mu] = \{\emptyset,abcd, abe\}$, 
		and relative to $\mu$:
		$d_\hamming(v,w)$ is normalized by the distance $d_\hamming(v, \mu)$ from $v$ to $\mu$.
		The lower surprise is, the more plausible $w$ is considered to be, 
		from the standpoint of $v$. 
		The model minimizing overall surprise is emphasized in bold font.
	}
	\label{tab:hamming-surprise-1}
\end{table}

\begin{table}\centering
	\begin{tabular}{rccc}
		\toprule
		$s^\nu_{\hamming}$             & 
		$\emptyset$				   & 
		$abcd$ 					   & 
		$\max$ 					   \\\midrule 
				
	    $abcd$ 					   & 
		$4-3$  		     		   & 
		$0-0$	    			   & 
		$\mathbf{1}$			   \\

		$abe$ 					   & 
		$3-3$  		     		   & 
		$3-0$   				   & 
		$3$      		   		   \\
		\bottomrule
	\end{tabular}	
	\caption{
		Relative Hamming surprise $s^\nu_\hamming(v, w)$, 
		$[\phi]=\{\emptyset,abcd\}$, 
		$[\mu] = \{\emptyset,abcd, abe\}$.
		The best interpretation is now $abcd$:
		the ranking induced by relative surprise depends on $\mu$,
		as well as $\phi$.
	}
	\label{tab:hamming-surprise-2}
\end{table}

\begin{example}\label{ex:hamming-surprise}
	Consider formulas $\phi$ and $\mu$ as in Example~\ref{ex:dalal-hmax},
	with $[\phi] = \{\emptyset, abcd\}$
	and $[\mu]=\{\emptyset,abcd, abe\}$.
	We have that $d_\hamming(\emptyset, \mu) = \min_{w\in[\mu]} d_\hamming(\emptyset,w)=0$, 
	and thus $s^\mu_{\hamming}(\emptyset,abcd) = d_\hamming(\emptyset, abcd)-d_{\hamming}(\emptyset,\mu)=4-0=4$.
	The surprise terms are depicted in Table \ref{tab:hamming-surprise-1}.
	We obtain, thus, that 
	$[\phi\circ^{s_{\hamming},\,\max}\mu] = [\phi\circ^{d_{\hamming},\,\max}\mu]=\{abe\}$.
	Consider, now, a formula $\nu$ with $[\nu]=\{abcd, abe\}$,
	with the surprise scores depicted in Table \ref{tab:hamming-surprise-2}.
	Note that in this case we obtain that 
	$[\phi\circ^{s_{\hamming},\,\max}\nu] =\{abcd\}$.
	Thus, in revision by $\mu$, $abe$ is chosen over $abcd$, 
	whereas in revision by $\nu$ the choice is reversed.
	Intuitively, when $\emptyset$ stops being a viable option,
	$abcd$ becomes more attractive than $abe$, 
	as the amount of surprise it would inflict, 
	from the standpoint of $\emptyset$,
	relative to $abe$, becomes smaller:
	considering the options, $abcd$ is not as extreme as $abe$. 
	In other words, for $\emptyset$ the two interpretations 
	$abcd$ and $abe$ are sufficiently alike to be considered almost equally risky: 
	the marginal surprise that $abcd$ carries over $abe$ is not big enough to be considered significant, 
	so that the final decision ends up choosing $abcd$ as carrying the least amount of risk. 
	By contrast, when $\emptyset$ is present as an option (see Table \ref{tab:hamming-surprise-1}) 
	the situation is markedly different, as the relative surprise of actually ending up 
	with $abcd$ or $abe$ becomes much more significant.
\end{example}

\noindent 
The type of scenario depicted in Example \ref{ex:hamming-surprise} is reminiscent of 
deviations from the principle of independence from irrelevant alternatives
signaled in the rational choice literature \cite{Sen1993-zh},
and immediately points toward a salient feature of the relative surprise operator
we have introduced:
it is not guaranteed to satisfy postulates $\mathrm{R}_{2}$, $\mathrm{R}_{5}$ and $\mathrm{R}_{6}$. 
Indeed, for $\phi$ and $\mu$ from Example \ref{ex:hamming-surprise}
we have that $[\phi\circ^{s_{\hamming},\,\max}\mu]=\{abe\}$, 
despite the fact that $[\phi\land\mu]=\{\emptyset,abcd\}$, 
which speaks to postulate $\mathrm{R}_{2}$.
Since $\phi\circ^{s_{\hamming},\,\max}\mu$ coincides, in this case, 
with $\phi\circ^{d_{\hamming},\,\max}\mu$, and 
$\circ^{d_{\hamming},\,\max}$ is already known not to satisfy postulate $\mathrm{R}_{2}$, 
this is perhaps not surprising, but similar reasoning shows that $\phi\circ^{s_{\hamming},\,\max}\mu$
does not satisfy postulates $\mathrm{R}_{7}$ and $\mathrm{R}_{8}$ either.
And $[\phi\circ^{s_{\hamming},\,\max}(\mu\land\nu)]=\{abcd\}$, 
despite the fact that $[(\phi\circ^{s_{\hamming},\,\max}\mu)\land\nu]=\{abe\}$, 
which speaks to postulates $\mathrm{R}_{5}$ and $\mathrm{R}_{6}$.
More to the point, the ranking on interpretations that is generated by the surprise measure $s+\hamming$
varies with $\mu$, to the extent that narrowing down the new information, as in Example \ref{ex:hamming-surprise},
can lead to inversions between the relative ranking of two interpretations.
At the same time, the ranking plainly depends on nothing more than $\phi$ and $\mu$, such that the 
result of revision is invariant to the syntax of the prior and new information. 
Additionally, $\circ^{s_{\hamming},\,\max}$ selects the result from the models of $\mu$, 
and is guaranteed to output \emph{something} as long as $\mu$ is consistent. 
We summarize these observations in the following proposition.

\begin{proposition}\label{prop:surprise-hamming-postulates}
	The operator $\circ^{s_{\hamming},\,\max}$ satisfies postulates 
	$\mathrm{R}_{1}$, 
	$\mathrm{R}_{3}$ and 
	$\mathrm{R}_{4}$,
	but not 
	$\mathrm{R}_{2}$, 
	$\mathrm{R}_{5}$,
	$\mathrm{R}_{6}$, 
	$\mathrm{R}_{7}$
	and 
	$\mathrm{R}_{8}$.
\end{proposition}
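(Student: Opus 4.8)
The plan is to dispatch the three satisfied postulates directly from the definition of $\circ^{s_\hamming,\,\max}$, and to refute the remaining five by exhibiting concrete counterexamples, several of which can be read off from Example~\ref{ex:hamming-surprise}. For $\mathrm{R}_1$ I would observe that $[\phi\circ^{s_\hamming,\,\max}\mu]$ is by construction an $\mathrm{argmin}$ taken over $w\in[\mu]$, hence a subset of $[\mu]$, which immediately yields $\phi\circ^{s_\hamming,\,\max}\mu\models\mu$. For $\mathrm{R}_3$ I would use that, assuming $\phi$ is consistent, $[\phi]$ is finite and nonempty, so $\max_{v\in[\phi]}s_\hamming^{\mu}(v,w)$ is a well-defined integer for every $w$ (indeed a nonnegative one, since $d_\hamming(v,w)\ge d_\hamming(v,\mu)$ for $w\in[\mu]$); when $\mu$ is consistent $[\mu]$ is finite and nonempty, so the $\mathrm{argmin}$ is nonempty and the result is consistent. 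For $\mathrm{R}_4$ I would note that $[\mu]$, the normalizing quantity $d_\hamming(v,\mu)=\min_{w\in[\mu]}d_\hamming(v,w)$, and $[\phi]$ all depend only on the models of the formulas involved; hence $s_\hamming^{\mu}(v,w)$ and the ensuing $\mathrm{argmin}$ are invariant under replacing $\phi,\mu$ by logically equivalent formulas.

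For the failures of $\mathrm{R}_2$, $\mathrm{R}_5$ and $\mathrm{R}_6$ I would invoke the formulas of Example~\ref{ex:hamming-surprise}. With $[\phi]=\{\emptyset,abcd\}$ and $[\mu]=\{\emptyset,abcd,abe\}$ we have $[\phi\land\mu]=\{\emptyset,abcd\}$ but $[\phi\circ^{s_\hamming,\,\max}\mu]=\{abe\}$, refuting $\mathrm{R}_2$. Taking in addition $\nu$ with $[\nu]=\{abcd,abe\}$, so that $\mu\land\nu\equiv\nu$, the computations already recorded give $[(\phi\circ^{s_\hamming,\,\max}\mu)\land\nu]=\{abe\}$ while $[\phi\circ^{s_\hamming,\,\max}(\mu\land\nu)]=\{abcd\}$; since $\{abe\}\not\subseteq\{abcd\}$ and $\{abcd\}\not\subseteq\{abe\}$, and $(\phi\circ^{s_\hamming,\,\max}\mu)\land\nu$ is consistent, both $\mathrm{R}_5$ and $\mathrm{R}_6$ fail.

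The delicate part is $\mathrm{R}_7$ and $\mathrm{R}_8$, since Example~\ref{ex:hamming-surprise} does not settle them: there $[\dual{\phi}]=\{abcde,e\}$ is disjoint from the result $\{abe\}$, so both postulates hold vacuously on that instance. For $\mathrm{R}_8$ I would keep a small self-dual prior, e.g. $[\phi]=\{\emptyset,abcd\}$ over $\{a,b,c,d\}$ (so that $[\dual{\phi}]=\{\emptyset,abcd\}$), and take $[\mu]=\{abcd,abc\}$. A short calculation of the normalizations $d_\hamming(\emptyset,\mu)=3$, $d_\hamming(abcd,\mu)=0$ shows that $abcd$ and $abc$ both attain max-surprise $1$, so $[\phi\circ^{s_\hamming,\,\max}\mu]=\{abcd,abc\}$; the dual model $abcd$ thus survives in the result even though $\mu\not\models\dual{\phi}$, refuting $\mathrm{R}_8$.

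The genuine obstacle is $\mathrm{R}_7$, which demands an input on which the \emph{entire} result lies inside $[\dual{\phi}]$ while being strictly smaller than $[\mu]$. This is hard to arrange directly: a dual model $\dual{v}$ sits at maximal Hamming distance $|\Atoms|$ from its own source $v\in[\phi]$, so its max-surprise tends to be large and it is typically beaten, or at best tied, by more central non-dual models of $\mu$; any attempt to let a dual model strictly win against a non-dual competitor runs into this imbalance. The way around it is to remove the non-dual competition altogether by setting $[\mu]=[\dual{\phi}]$ and choosing $\phi$ with at least three models, so the operator is forced to discriminate \emph{among} dual models. Concretely I would use $[\phi]=\{\emptyset,a,ab\}$ over $\{a,b,c,d\}$, for which $[\dual{\phi}]=\{abcd,bcd,cd\}$; computing the normalizations $d_\hamming(\emptyset,\mu)=2$, $d_\hamming(a,\mu)=3$, $d_\hamming(ab,\mu)=2$ and then the max-surprise of each model of $\mu=\dual{\phi}$ gives values $2$, $1$, $2$ for $abcd$, $bcd$, $cd$ respectively, so $[\phi\circ^{s_\hamming,\,\max}\mu]=\{bcd\}\subsetneq[\mu]=[\dual{\phi}]$. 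Hence $\phi\circ^{s_\hamming,\,\max}\mu\models\dual{\phi}$ while $\phi\circ^{s_\hamming,\,\max}\mu\not\equiv\mu$, refuting $\mathrm{R}_7$ and completing the argument.
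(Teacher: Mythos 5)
Your proposal is correct, and it is in fact more complete than the paper's own argument. For the satisfied postulates $\mathrm{R}_{1}$, $\mathrm{R}_{3}$, $\mathrm{R}_{4}$ and the refutations of $\mathrm{R}_{2}$, $\mathrm{R}_{5}$, $\mathrm{R}_{6}$ you follow essentially the paper's route: read the positive claims off the $\mathrm{argmin}$-over-$[\mu]$ definition, and reuse Example~\ref{ex:hamming-surprise} (the pair $\mu$, $\nu$ with $\mu\land\nu\equiv\nu$) for the counterexamples; your verifications there match the paper's computations exactly. Where you genuinely diverge is on $\mathrm{R}_{7}$ and $\mathrm{R}_{8}$: the paper dismisses these with a single remark that ``similar reasoning'' applies, without exhibiting any instance. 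You correctly observe that the paper's own example cannot be reused as-is, since there $[\dual{\phi}]=\{abcde,e\}$ is disjoint from the revision output $\{abe\}$, so both postulates hold (vacuously for $\mathrm{R}_{7}$) on that instance. Your replacement counterexamples check out: for $\mathrm{R}_{8}$, with $[\phi]=\{\emptyset,abcd\}$ over $\{a,b,c,d\}$ and $[\mu]=\{abcd,abc\}$, both models of $\mu$ attain max-surprise $1$, so the dual model $abcd$ survives in the result while $\mu\not\models\dual{\phi}$; for $\mathrm{R}_{7}$, with $[\phi]=\{\emptyset,a,ab\}$ and $[\mu]=[\dual{\phi}]=\{abcd,bcd,cd\}$, the max-surprise values are $2$, $1$, $2$, giving $[\phi\circ^{s_{\hamming},\,\max}\mu]=\{bcd\}\subsetneq[\mu]$, so the antecedent of $\mathrm{R}_{7}$ holds but the conclusion fails. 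Your trick of forcing the result inside $[\dual{\phi}]$ by taking $\mu$ to be $\dual{\phi}$ itself (so the operator must discriminate among dual models) is a nice way around the obstacle you identify, namely that dual models are otherwise hard to make strict winners. The net effect is that your write-up supplies the missing evidence the paper's proof only asserts.
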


\noindent
One detail worth mentioning is that when $\phi$ is complete 
all operators presented so far coincide.

\begin{proposition}\label{prop:complete-equivalence}
	For any complete formula $\phi_v$,  
	$\phi \circ^{d_\hamming,\,\min}\mu\equiv \phi\circ^{d_{\hamming},\,\max}\equiv \phi\circ^{s_{\hamming},\,\max}\mu$, for any formula $\mu$.
\end{proposition}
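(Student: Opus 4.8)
The plan is to reduce all three operators to a single \textrm{argmin} expression by exploiting the fact that $[\phi_v]$ is a singleton. Since each operator is specified semantically, by giving the model set of its output as an $\mathrm{argmin}$ over $[\mu]$, two such operators are equivalent precisely when these model sets coincide for every input; so it suffices to show that all three $\mathrm{argmin}$ sets equal $\mathrm{argmin}_{w\in[\mu]} d_\hamming(v,w)$.

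First I would handle $\circ^{d_\hamming,\,\min}$ and $\circ^{d_{\hamming},\,\max}$ together. With $[\phi_v]=\{v\}$ a singleton, both the inner $\min_{v'\in[\phi_v]}$ and the inner $\max_{v'\in[\phi_v]}$ range over a one-element set, so each collapses to evaluation at $v$: $\min_{v'\in[\phi_v]} d_\hamming(v',w) = \max_{v'\in[\phi_v]} d_\hamming(v',w) = d_\hamming(v,w)$. Substituting into the definitions gives $[\phi_v\circ^{d_\hamming,\,\min}\mu] = [\phi_v\circ^{d_{\hamming},\,\max}\mu] = \mathrm{argmin}_{w\in[\mu]} d_\hamming(v,w)$, which settles the first equivalence.

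For $\circ^{s_{\hamming},\,\max}$ the same singleton collapse reduces the inner maximum to $s_\hamming^{\mu}(v,w) = d_\hamming(v,w) - d_\hamming(v,\mu)$. The key point is that $d_\hamming(v,\mu) = \min_{w'\in[\mu]} d_\hamming(v,w')$ depends only on $v$ and $\mu$, not on the candidate $w$ being scored; it is therefore a constant offset subtracted uniformly from every term of the minimization. Subtracting a $w$-independent constant cannot change which $w$ attain the minimum, so $\mathrm{argmin}_{w\in[\mu]} s_\hamming^{\mu}(v,w) = \mathrm{argmin}_{w\in[\mu]} d_\hamming(v,w)$, matching the common value from the previous step.

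There is no real obstacle here; the proof is a direct computation. The only point requiring a moment's care is recognizing that $d_\hamming(v,\mu)$ is the min-distance from the fixed interpretation $v$ to $\mu$ (as in Example~\ref{ex:hamming-surprise}), and hence a constant with respect to the outer optimization over $w$. This is exactly what makes the surprise correction vacuous once $\phi$ is complete, collapsing the relative operator onto the two absolute ones.
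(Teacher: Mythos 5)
Your proof is correct and follows the same idea as the paper's (much terser) argument: for complete $\phi_v$ only the ranking of interpretations induced by $v$ matters, and this ranking coincides across all three operators. Your write-up simply makes explicit the two facts the paper leaves implicit --- the singleton collapse of the inner $\min$/$\max$, and the observation that subtracting the $w$-independent constant $d_\hamming(v,\mu)$ cannot change the $\mathrm{argmin}$ --- which is a welcome clarification rather than a different route.
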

\begin{proof}
	For complete $\phi_v$ it is only the relative ranking of interpretations with respect to $v$
	that matters, and this is the same for all three operators.
\end{proof}

\noindent
Proposition \ref{prop:surprise-hamming-postulates} shows that the $\circ^{s_{\hamming},\,\max}$ operator 
does not fit neatly into the standard revision framework.
However, since, we have argued, $\circ^{s_{\hamming},\,\max}$ formalizes an appealing intuition, 
it will be useful to unearth the general rules underpinning it: 
our goal, now, is to find a set of normative principles 
strong enough to characterize $\circ^{s_{\hamming},\,\max}$.
A set of such principles is offered in Section \ref{sec:surprise-hamming-max}, 
but, since $\circ^{s_{\hamming},\,\max}$ can be seen as a more involved min-max operator, 
we set the scene by first characterizing $\circ^{d_{\hamming},\,\max}$.
And to set the scene for $\circ^{d_{\hamming},\,\max}$, we first characterize 
the Dalal operator.

\section{Characterizing the Dalal Operator}\label{sec:dalal}
In this section we present a set of postulates that 
characterize the Dalal operator $\circ^{d_\hamming,\,\min}$.
Apart from being of independent interest,
this section presents, in the familiar setting of a known operator,
the main intuitions and techniques used in subsequent sections.
We start by introducing some additional new notions.

A \emph{renaming $r$ of $\Atoms$} is a bijective function $r\colon\Atoms\rightarrow\Atoms$.
If $\phi$ is a propositional formula, the \emph{renaming $r(\phi)$ of $\phi$}
is a formula $r(\phi)$ whose atoms are replaced according to $r$.
On the semantic side,
if $w$ is an interpretation and $r$ is a renaming of $\Atoms$,
the \emph{renaming $r(w)$ of $w$} is an interpretation obtained 
by replacing every atom $p$ in $w$ with $r(p)$.
If $\W$ is a set of interpretations, the \emph{renaming $r(\W)$ of $\W$} is 
defined as $r(\W)=\{r(w)\mid w\in\W\}$,
i.e., the set of interpretations whose elements are the renamed interpretations in $\W$.

A \emph{flip function $f\colon 2^{A}\times \L\rightarrow\L$} is 
a function that takes as input a set $v\subseteq A$ of atoms
(equivalently, $v$ can be thought of as an interpretation)
and a propositional formula $\phi$,
and returns a propositional formula $f_{v}(\phi)$ that is just like $\phi$ except 
that all the atoms from $v$ that appear in $\phi$ are flipped, i.e., replaced with their negations.
Overloading notation, a flip function applied to interpretations $v$ and $w$ returns 
an interpretation $f_{v}(w)$ in which all the atoms from $v$
that appear in $w$ are flipped, i.e., 
$f_v(w)=\{p\in A\mid p\in w~\text{and}~p\notin v,~\text{or}~p\in v~\text{and}~p\notin w\}$. 
It is straightforward to see that $f_v(w) = w\triangle v$.
If $\W$ is a set of interpretations, then $f_{v}(\W) = \{f_{v}(w)\mid w\in\W\}$,
i.e., the set of interpretations obtained by flipping every atom in $v$.

\begin{example}\label{ex:flipping-renaming}	
	For the set $A = \{a,b,c\}$ of atoms, 
	consider a formula $\phi = a\land\lnot c$, 
	with $[\phi]=\{a,ab\}$,
	and a renaming $r$ such that $r(a)=b$, $r(b)=c$ and $r(c)=a$. 
	We obtain that $r(\phi) = r(a)\land\lnot r(c) = b\land \lnot a$, 
	with $[r(\phi)]=\{b,bc\} = \{r(a), r(ab)\}$.
	Flipping atoms $b$ and $c$, we have that 
	$f_{bc}(\phi) = a\land \lnot (\lnot c)$, 
	with $[f_{bc}(\phi)] = \{abc, ac\}$.
	Note that $[f_{bc}(\phi)] = \{f_{bc}(a), f_{bc}(ab)\} = \{a\triangle bc, ab \triangle bc\}$.
\end{example}

\noindent
In Example \ref{ex:flipping-renaming} it holds that:
($i$) $[r(\phi)] = r([\phi])$,
($ii$) $[f_{w}(\phi)] = f_{w}([\phi])$
and ($iii$) $[f_{w}(\phi)] = \{v\triangle w\mid v\in[\phi]\}$,
and we note here that all these equalities hold generally 
(for ($ii$) see, for instance, Exercise 2.28 in \cite{Goldrei2005}).
Their relevance will become apparent shortly.

To characterize the Dalal operator $\circ^{d_\hamming,\,\min}$ 
we introduce a set of new postulates, starting with 
\emph{Neutrality} $\mathrm{R}_{\Neutrality}$:

\begin{description}
	\item[($\mathrm{R}_{\Neutrality}$)] If $\phi$ is complete, 
		then $r(\phi\circ\mu)\equiv r(\phi)\circ r(\mu)$.
\end{description}

\noindent
Postulate $\mathrm{R}_{\Neutrality}$ states that revision is invariant 
under renaming atoms and hence neutral in that the specific labels 
for the atoms do not matter towards the final result. 
This postulate is inspired by similar ideas in social choice 
and has appeared before in belief change contexts \cite{HerzigR1999,MarquisS2014,HaretW2019}.

The next postulate concerns the effect of 
flipping the same atoms in both $\phi$ and $\mu$, and is called, appropriately,
the \emph{Flipping} postulate $\mathrm{R}_{\Flipping}$:

\begin{description}
	\item[($\mathrm{R}_{\Flipping}$)] If $\phi$ is complete, then $f_{v}(\phi\circ\mu) = f_{v}(\phi)\circ f_{v}(\mu)$.
\end{description}

\noindent
An additional constraint, the \emph{Addition} postulate $\mathrm{R}_{\Addition}$, 
is obtained by considering the effect of adding new atoms that affect the 
standing of one interpretation, and is meant to apply to any formulas $\phi$ and $\mu$ and 
set $x$ of new atoms, i.e., such that none of the atoms in $x$ appears in either $\phi$ or $\mu$:
\begin{description}
	\item[($\mathrm{R}_{\Addition}$)] If  $\phi$ is complete and $(\phi\circ\mu_{w_1,w_2})\land\mu_{w_1}$ is consistent, then $\phi\circ\mu_{w_1,w_2 \cup x}\equiv \mu_{w_1}$.
\end{description}

\noindent
Postulate $\mathrm{R}_{\Addition}$ is best understood through a choice perspective:
if $w_1$ is chosen by $\phi$ over $w_2$ when the choice is $[\mu_{w_1,w_2}]=\{w_1,w_2\}$, 
then adding extra new atoms $x$ to $w_2$, 
(and, thereby, increasing the distance to $\phi$) 
ensures that $w_{2}\cup x$ is not chosen when the choice is $[\mu_{w_1,w_2\cup x}]=\{w_1,w_2\cup x\}$.
In all of these postulates the prior belief $\phi$ is assumed to be complete: 
this is not essential for the characterization of the Dalal operator, 
but makes life easier in the characterization of the surprise minimization operator, in Section \ref{sec:surprise-hamming-max}.

The next postulate involves a mix of flips and we ease into it by introducing an intermediary notion.
The \emph{best-of-best formula $\beta_{\phi, \mu}$ with respect to $\phi$ and $\mu$}
is defined as:
$$
	\beta_{\phi,\mu} = \epsilon\circ\Big(\bigvee_{v\in[\phi]} f_{v}(\mu)\Big),
$$
i.e., as the result of revising the null formula $\epsilon$ 
(recall that $[\epsilon]=\{\emptyset\}$) by a disjunction made up 
of multiple versions of $\mu$, where each such version is obtained by flipping 
the atoms in a model $v$ of $\phi$.
Intuitively, the intention is to recreate the table of Hamming distances (e.g., Table \ref{tab:dalal-hmax}) without using numbers: 
recall that $[f_v(\mu)] = f_v([\mu])$ and $f_v(w) = w\triangle v$
and thus, semantically, we have that 
$[\bigvee_{v\in[\phi]} f_{v}(\mu)] = \{w_i\triangle v_j\mid w_i\in[\mu], v_j\in[\phi]\}$.
In other words, we are creating a scenario in which $\epsilon$ has to choose between interpretations obtained 
as the symmetric difference of the elements of $[\phi]$ and $[\mu]$. 
The result we are working towards, yet to be proven, is that an element of $[\bigvee_{v\in[\phi]} f_{v}(\mu)]$ chosen by $\epsilon$, 
i.e., an interpretation $w_i\triangle v_j\in[\beta_{\phi,\mu}]$,
corresponds to an interpretation $w_i\in[\mu]$ that minimizes the overall Hamming distance to $\phi$, 
and is thus among the best of the best interpretations in this revision scenario. 
The role of the \emph{Best-of-Best} postulate $\mathrm{R}_{\BoB}$, then, is to recover the models of $\mu$ from 
the models of $\beta_{\phi,\mu}$:

\begin{description}
	\item[($\mathrm{R}_{\BoB}$)] $\phi\circ\mu\equiv \bigg(\bigvee_{v\in[\phi]} f_{v}(\beta_{\phi,\mu})\bigg)\land\mu$.
\end{description}

\noindent
Postulate $\mathrm{R}_{\BoB}$ stipulates that the result of revising $\phi$ by $\mu$
consists of those interpretations of $\mu$ that come out of flipping $\beta_{\phi,\mu}$
by each model of $\phi$, in this way reversing the initial flips that delivered the 
revision formula posed to $\epsilon$.

What is the significance of the null formula $\epsilon$ in $\beta_{\phi,\mu}$?
We want to reduce arbitrary revision tasks to a common denominator, a base case in which 
the result of revision can be decided without explicit appeal to distances (i.e., numbers), 
and only by appeal to desirable normative principles, such as the postulates laid out above.
The case when the prior belief is $\epsilon$ turns out to be well suited for this task, 
since, as
we show next, postulates 
$\mathrm{R}_{1}$, $\mathrm{R}_{3}$-$\mathrm{R}_{6}$, $\mathrm{R}_{\Neutrality}$ and $\mathrm{R}_{\Addition}$ 
guarantee that $\epsilon$ always selects the interpretations with minimal cardinality.

\begin{lemma}\label{lem:empty-set-chooses-well}
	If a revision operator $\circ$ satisfies postulates 
	$\mathrm{R}_{1}$, 
	$\mathrm{R}_{3}$-$\mathrm{R}_{6}$, 
	$\mathrm{R}_{\Neutrality}$ and
	$\mathrm{R}_{\Addition}$,
	then, for any formula $\mu$, 
	it holds that $[\epsilon\circ\mu]=\mathrm{argmin}_{w\in[\mu]} |w|$.
\end{lemma}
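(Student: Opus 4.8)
The plan is to show that, when the prior belief is the complete formula $\epsilon$ (whose unique model is $\emptyset$), the operator ranks interpretations purely by cardinality, and then to lift this pairwise behaviour to an arbitrary $\mu$. Throughout I work semantically, using $\mathrm{R}_{4}$ to pass freely between formulas and their model sets, and I invoke $\mathrm{R}_{1}$ and $\mathrm{R}_{3}$ to ensure that $\emptyset\neq[\epsilon\circ\mu]\subseteq[\mu]$ whenever $\mu$ is consistent; the inconsistent case is trivial, since both sides are then empty. The heart of the argument is the two-element case, i.e., the behaviour of $\epsilon$ on formulas $\mu_{w_1,w_2}$, which splits into an equal-cardinality subcase and a strict-cardinality subcase.

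First I would show that equal cardinality forces a tie: if $|w_1|=|w_2|$, then $[\epsilon\circ\mu_{w_1,w_2}]=\{w_1,w_2\}$. Writing $C=w_1\cap w_2$, $P=w_1\setminus w_2$ and $Q=w_2\setminus w_1$, equal cardinality gives $|P|=|Q|$, so the map fixing $C$ pointwise and swapping $P$ with $Q$ through a bijection is a renaming $r$ with $r(w_1)=w_2$ and $r(w_2)=w_1$. Since $r(\epsilon)\equiv\epsilon$ and $r(\mu_{w_1,w_2})\equiv\mu_{w_1,w_2}$, postulate $\mathrm{R}_{\Neutrality}$ yields $r([\epsilon\circ\mu_{w_1,w_2}])=[\epsilon\circ\mu_{w_1,w_2}]$; a nonempty subset of $\{w_1,w_2\}$ invariant under a map that swaps $w_1$ and $w_2$ can only be $\{w_1,w_2\}$.

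Next I would show that strict cardinality forces a strict preference: if $|w_1|<|w_2|$, then $[\epsilon\circ\mu_{w_1,w_2}]=\{w_1\}$. Here the idea is to manufacture a configuration to which $\mathrm{R}_{\Addition}$ applies and then transport it by a renaming. Choosing an auxiliary $u$ with $|u|=|w_1|$, the tie case gives $w_1\in[\epsilon\circ\mu_{w_1,u}]$, so $(\epsilon\circ\mu_{w_1,u})\land\mu_{w_1}$ is consistent, and $\mathrm{R}_{\Addition}$, applied with a set $x$ of atoms fresh relative to $\mu_{w_1,u}$ and of size $|w_2|-|w_1|$, delivers $[\epsilon\circ\mu_{w_1,u\cup x}]=\{w_1\}$. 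The freedom in $u$ (constrained only by $|u|=|w_1|$ and by how many atoms it shares with $w_1$) and in $x$ (fresh, hence disjoint from $w_1$) lets me match the overlap type $(|w_1\cap w_2|,|w_1\setminus w_2|,|w_2\setminus w_1|)$ of the pair $(w_1,w_2)$, so that a renaming $r$ exists with $r(w_1)=w_1$ and $r(u\cup x)=w_2$; applying $\mathrm{R}_{\Neutrality}$ transports $\{w_1\}$ to $[\epsilon\circ\mu_{w_1,w_2}]=\{w_1\}$. I expect this step to be the main obstacle, precisely because $\mathrm{R}_{\Addition}$ adds fresh atoms to only one side and so cannot be applied to the given pair directly; the whole difficulty is the choice of $u$ and $x$ making the transporting renaming available. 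A secondary point is that $\epsilon$ nominally mentions every atom, which I resolve by reading the freshness requirement as relative to $\mu$, since the all-false model of $\epsilon$ imposes no positive constraint on the added atoms.

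Finally I would lift these pairwise facts to an arbitrary consistent $\mu$ using $\mathrm{R}_{5}$ and $\mathrm{R}_{6}$. For $[\epsilon\circ\mu]\subseteq\mathrm{argmin}_{w\in[\mu]}|w|$, suppose $w\in[\epsilon\circ\mu]$ while some $w'\in[\mu]$ has $|w'|<|w|$; since $\mu\land\mu_{w,w'}\equiv\mu_{w,w'}$, postulate $\mathrm{R}_{5}$ forces $w\in[\epsilon\circ\mu_{w,w'}]$, contradicting $[\epsilon\circ\mu_{w',w}]=\{w'\}$ from the strict case. For the reverse inclusion, take a cardinality-minimal $w\in[\mu]$ and any $w^\ast\in[\epsilon\circ\mu]$ (nonempty by $\mathrm{R}_{3}$), which by the first inclusion is also minimal and hence has $|w^\ast|=|w|$; then $\mathrm{R}_{5}$ together with $\mathrm{R}_{6}$, whose antecedent holds because $w^\ast$ witnesses the consistency of $(\epsilon\circ\mu)\land\mu_{w,w^\ast}$, give $\epsilon\circ\mu_{w,w^\ast}\equiv(\epsilon\circ\mu)\land\mu_{w,w^\ast}$, and the tie case identifies the left-hand side with $\{w,w^\ast\}$, forcing $w\in[\epsilon\circ\mu]$. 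Combining the two inclusions gives $[\epsilon\circ\mu]=\mathrm{argmin}_{w\in[\mu]}|w|$, as required.
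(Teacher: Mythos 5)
Your proof is correct, and it relies on the same three ingredients as the paper's own argument---reduction to two-model formulas via $\mathrm{R}_{5}$/$\mathrm{R}_{6}$, padding one interpretation with fresh atoms via $\mathrm{R}_{\Addition}$, and the swap renaming under $\mathrm{R}_{\Neutrality}$---but it assembles them along a genuinely different route. The paper never isolates your two pairwise facts: both inclusions are run as contradiction arguments directly on an arbitrary $\mu$, and in each one $\mathrm{R}_{\Addition}$ is applied \emph{in place}: assuming (towards a contradiction) that the larger of the two interpretations is chosen, the smaller one is padded up to equal cardinality, so that $\mathrm{R}_{\Addition}$ yields a unique winner, which the swap renaming then refutes. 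Your decomposition instead establishes two positive pairwise lemmas (equal cardinality forces a tie; strict inequality forces the smaller interpretation to be the unique choice) and lifts them afterwards. The price is paid in your strict case: since you cannot assume the $\mathrm{R}_{\Addition}$ premise for the pair $(w_1,w_2)$ itself, you need the auxiliary interpretation $u$, the overlap-type bookkeeping $(|w_1\cap w_2|,|w_1\setminus w_2|,|w_2\setminus w_1|)$, and a second ``transport'' renaming with $r(w_1)=w_1$ and $r(u\cup x)=w_2$, none of which the paper requires; your edge case $w_1\subseteq w_2$ (where $u=w_1$) is covered but deserves a word. What you gain is a sharper picture of what the postulates force---$\epsilon$'s choices on pairs are completely determined by cardinality comparison, with ties exactly at equal cardinality---plus a direct, non-contradiction proof of the inclusion $\mathrm{argmin}_{w\in[\mu]}|w|\subseteq[\epsilon\circ\mu]$, and an explicit treatment of inconsistent $\mu$, which the paper leaves tacit. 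Finally, both you and the paper face the same wart, which you at least flag: $\epsilon$ syntactically mentions every atom, so the freshness condition in $\mathrm{R}_{\Addition}$ must be read relative to $\mu$ (or semantically); the paper's own proof silently makes exactly this reading.
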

\begin{proof}
	(``$\subseteq$'')
	Suppose, first, that $w_{1}\in[\epsilon\circ \mu]$ and there is $w_2\in[\mu]$ 
	such that $|w_{1}|>|w_{2}|$.
	Using postulate $\mathrm{R}_{5}$ we obtain that $w_1\in[\phi\circ\mu_{w_1,w_2}]$.
	We now show that this leads to a contradiction, 
	and we do this using the Neutrality postulate $\mathrm{R}_{\Neutrality}$:
	however, we would like to apply $\mathrm{R}_{\Neutrality}$ to interpretations of equal size.
	Towards this, take a set $x$ of new atoms (i.e., that do not occur in either $\phi$ or $\mu$), 
	with $|x| = |w_1|-|w_2|$, and add $x$ to $w_2$ to form $w_{2}' = w_{2}\cup x$.
	We have that $|w'_{2}| = |w_{2}|+(|w_{1}|-|w_{2}|) = |w_{1}|$, 
	i.e., $w_1$ and $w'_2$ are of the same size,
	which implies that $|w_1\setminus w'_2| = |w'_2\setminus w_1|$.
	Applying the addition postulate $\mathrm{R}_{\Addition}$, 
	we obtain that $w'_2\notin[\epsilon\circ\mu_{w_1,w'2}]$. 
	
	Consider, now, a renaming $r$ that swaps atoms 
	in $w_1\setminus w'_2$ with atoms in $w'_2\setminus w_1$, 
	made possible by the fact that $w_1\setminus w'_2$ and $w'_2\setminus w_1$
	are of the same size. 
	This implies that $r(w_1) = w'_2$ and $r(w'_2) = w_1$ and thus 
	$r([\mu_{w_1,w'_2}]) = r(\{w_1,w'_2\}) = \{r(w_1), r(w'_2)\} = \{w'_2,w_1\} = [\mu_{w_1,w'_2}]$.
	Applying the Neutrality postulate $\mathrm{R}_{\Neutrality}$ to 
	$\epsilon\circ\mu_{w_1,w'_2}$ with the renaming $r$ thus defined, 
	and, keeping in mind that $[r(\epsilon)] = [\epsilon]$,
	and thus that $r(\epsilon) \equiv \epsilon$, 
	we obtain that:
	\begin{align*}
		\{w_1\} & = [\epsilon\circ\mu_{w_1,w'_2}] &~\text{by assumption and}~\Addition\\ 
		        & = [r(\epsilon)\circ r(\mu_{w_1,w'_2})] &~\text{by def. of}~r~\text{and}~\mathrm{R}_{4}\\ 
				& = [r(\epsilon\circ \mu_{w_1,w'_2}))] &~\text{by}~\Neutrality\\ 
				& = r([\epsilon\circ \mu_{w_1,w'_2}]) &~\text{property of}~r\\ 
				& = r(\{w_1\}) & ~\text{by assumption}\\ 
				& = \{w'_2\}.
	\end{align*}
	This implies that $w_1 = w'_2$ but, since $w'_2$ contains a non-negative number of 
	atoms that do not appear in $w_1$, this is a contradiction.

	(``$\supseteq$'')
	For the opposite direction, suppose that $w_1\in \mathrm{argmin}_{w\in[\mu]}|w|$ 
	but $w_1\notin[\epsilon\circ\mu]$.
	Using postulates $\mathrm{R}_{1}$ and $\mathrm{R}_{3}$ we have that 
	there is $w_2\in[\phi\circ\mu]$ and, with postulate $\mathrm{R}_{6}$
	we obtain that $[\epsilon\circ \mu_{w_1,w_2}] = \{w_2\}$. 
	Since $|w_1|\le|w_2|$
	we add to $w_1$
	a set $x$ of new atoms, where $|x| = |w_2|-|w_1|$, 
	and denote $w'_1 = w_1\cup x$. Applying $\mathrm{R}_{\Addition}$
	we obtain that $[\epsilon\circ \mu_{w'_1,w_2}]=\{w_2\}$
	and, using a renaming $r$ defined, as in the previous direction, 
	such that $r(w_2)=w'_1$ and $r(w'_1)=w_2$,
	and applying $\mathrm{R}_{\Neutrality}$ to $r$ and $\epsilon\circ\mu_{w'_1,w_2}$,
	we obtain that $[\epsilon\circ \mu_{w'_1,w_2}]=\{w'_1\}$, 
	leading to a contradiction.
\end{proof}

\noindent
Lemma \ref{lem:empty-set-chooses-well} shows that, in the very particular case in which 
the prior belief is $\epsilon$, we can ensure that the result of revision coincides with the 
result delivered by the Dalal operator. 
The next move consists in using 
the Flipping postulate $\mathrm{R}_{\Flipping}$ to extend this fact to complete formulas.

\begin{lemma}\label{lem:complete-chooses-well}
	If a revision operator $\circ$ satisfies postulates 
	$\mathrm{R}_{1}$, 
	$\mathrm{R}_{3}$-$\mathrm{R}_{6}$, 
	$\mathrm{R}_{\Neutrality}$,
	$\mathrm{R}_{\Addition}$ and 
	$\mathrm{R}_{\Flipping}$,
	then, for any formula $\mu$
	and complete formula $\phi_v$,
	it holds that 
	$[\phi_v\circ\mu]=\mathrm{argmin}_{w\in[\mu]}d_\hamming(v,w)$.
\end{lemma}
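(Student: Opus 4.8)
The plan is to reduce the case of an arbitrary complete prior $\phi_v$ to the base case $\phi=\epsilon$ already settled in Lemma \ref{lem:empty-set-chooses-well}, using the Flipping postulate $\mathrm{R}_{\Flipping}$ as the bridge. The key observation is that flipping the atoms in $v$ sends the model $v$ to $v\triangle v=\emptyset$, so that $f_v(\phi_v)\equiv\epsilon$: flipping turns the prior $\phi_v$ into the null formula while simultaneously transforming the menu $[\mu]$ into $[f_v(\mu)]=\{w\triangle v\mid w\in[\mu]\}$ via identities ($ii$) and ($iii$) recorded after Example \ref{ex:flipping-renaming}.

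Concretely, first I would apply $\mathrm{R}_{\Flipping}$ with flip set $v$, together with $\mathrm{R}_{4}$, to obtain $f_v(\phi_v\circ\mu)\equiv f_v(\phi_v)\circ f_v(\mu)\equiv\epsilon\circ f_v(\mu)$. Next I would invoke Lemma \ref{lem:empty-set-chooses-well} on the right-hand side, giving $[\epsilon\circ f_v(\mu)]=\mathrm{argmin}_{w'\in[f_v(\mu)]}|w'|$. Since every $w'\in[f_v(\mu)]$ is of the form $w'=w\triangle v$ for a unique $w\in[\mu]$, and $|w\triangle v|=d_\hamming(v,w)$, minimizing cardinality over $[f_v(\mu)]$ is the same as minimizing Hamming distance to $v$ over $[\mu]$, carried across the flip; that is, $[\epsilon\circ f_v(\mu)]=\{w\triangle v\mid w\in\mathrm{argmin}_{w\in[\mu]}d_\hamming(v,w)\}$.

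Finally I would undo the flip. Combining the two computations yields $f_v([\phi_v\circ\mu])=\{w\triangle v\mid w\in\mathrm{argmin}_{w\in[\mu]}d_\hamming(v,w)\}$. Because the map $x\mapsto x\triangle v$ is an involution, hence a bijection, on $\U$, applying it once more to both sides cancels the outer flips and delivers exactly $[\phi_v\circ\mu]=\mathrm{argmin}_{w\in[\mu]}d_\hamming(v,w)$, as required.

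I do not expect any single step to pose a genuine difficulty, since the substantive work has already been absorbed into Lemma \ref{lem:empty-set-chooses-well}. The only points demanding care are bookkeeping ones: tracking that $f_v$ acts semantically as the symmetric-difference bijection, that $f_v(\phi_v)\equiv\epsilon$ rather than merely being syntactically altered (so that $\mathrm{R}_{4}$ is genuinely needed to pass to $\epsilon$), and that $|w\triangle v|=d_\hamming(v,w)$ is precisely the definition of Hamming distance.
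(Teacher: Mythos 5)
Your proposal is correct and follows essentially the same route as the paper's own proof: apply $\mathrm{R}_{\Flipping}$ to reduce $\phi_v\circ\mu$ to $\epsilon\circ f_v(\mu)$, invoke Lemma \ref{lem:empty-set-chooses-well}, and use $|w\triangle v|=d_\hamming(v,w)$ to translate minimal cardinality back into minimal Hamming distance. Your additional care in making explicit the final ``undo the flip'' step via the involution $x\mapsto x\triangle v$, and in noting that $\mathrm{R}_{4}$ is needed to replace $f_v(\phi_v)$ by $\epsilon$, only spells out what the paper leaves implicit.
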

\begin{proof}
	By postulate $\mathrm{R}_{\Flipping}$ it holds that 
	$f_v(\phi_v\circ\mu)\equiv f_v(\phi_v)\circ f_v(\mu)$.
	Note, now, that $[f_v(\phi_v)]=\{v\triangle v\}=\{\emptyset\}$, 
	and thus $f_v(\phi_v)\equiv\epsilon$, 
	while $[f_v(\mu)]=\{w\triangle v\mid w\in[\mu]\}$.
	By Lemma \ref{lem:empty-set-chooses-well}, it holds that 
	$[\epsilon\circ f_v(\mu)]=\min_{w\triangle v\in[f_v(\mu)]}|w\triangle v|$ 
	and, since $d_\hamming(v,w)=|w\triangle v|$, we derive the conclusion.
\end{proof}

\noindent
Lemma \ref{lem:complete-chooses-well} shows that it is not just 
the formula $\epsilon$ that makes choices consistent with the 
Dalal operator, but any complete formula $\phi_{v}$.
The intuition driving Lemma \ref{lem:complete-chooses-well}
is that the situation where $v$ chooses between $w_1$ and $w_2$ 
is equivalent, through the Flipping postulate $\mathrm{R}_{\Flipping}$,
to a scenario where $\emptyset$ chooses between $w_1\triangle v$ and $w_2\triangle v$: 
and we know that in this situation postulates 
$\mathrm{R}_{\Neutrality}$ and $\mathrm{R}_{\Addition}$ 
guide $\emptyset$ to choose the interpretation $w_i\triangle v$ of minimal cardinality, 
which corresponds to $w_i$ being at minimal Hamming distance to $v$.

The next step involves pushing this intuition even further,
to the case of any propositional formula $\phi$. 
As anticipated, the Best-of-Best postulate $\mathrm{R}_{\BoB}$
is the postulate that facilitates this move, and the proof goes 
through the intermediary obervation that the best-of-best formula 
$\beta_{\phi,\mu}$ selects interpretations corresponding to the desired redult.


\begin{lemma}\label{lem:any-chooses-well}
	If $\circ$ is a revision operator that satisfies postulates 
	$\mathrm{R}_{1}$, 
	$\mathrm{R}_{3}$-$\mathrm{R}_{6}$ 
	$\mathrm{R}_{4}$, 
	$\mathrm{R}_{\Neutrality}$,
	$\mathrm{R}_{\Addition}$ and
	$\mathrm{R}_{\Flipping}$
	then, for any formulas $\phi$ and $\mu$
	and interpretations $w$ and $v$,
	it holds that
	$w\triangle v\in[\beta_{\phi,\mu}]$
	if and only if
	$w\in \mathrm{argmin}_{w\in[\mu]}\min_{v\in[\phi]}d_\hamming(v, w)$.
\end{lemma}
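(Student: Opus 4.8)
The plan is to peel off the definition of $\beta_{\phi,\mu}$ and hand the work to Lemma~\ref{lem:empty-set-chooses-well}. First I would unfold $\beta_{\phi,\mu}=\epsilon\circ\bigl(\bigvee_{v\in[\phi]}f_{v}(\mu)\bigr)$ and determine the models of the inner disjunction. Using the general facts recorded after Example~\ref{ex:flipping-renaming}, namely $[f_{v}(\mu)]=f_{v}([\mu])=\{w\triangle v\mid w\in[\mu]\}$, the disjunction satisfies $\bigl[\bigvee_{v\in[\phi]}f_{v}(\mu)\bigr]=\{w\triangle v\mid w\in[\mu],\,v\in[\phi]\}$; these are exactly the symmetric differences between models of $\mu$ and models of $\phi$, i.e., the numberless counterpart of the distance table. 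Since $\circ$ satisfies the hypotheses of Lemma~\ref{lem:empty-set-chooses-well}, revising $\epsilon$ by this disjunction selects the elements of minimal cardinality, so $[\beta_{\phi,\mu}]=\mathrm{argmin}_{u}|u|$ with $u$ ranging over $\{w\triangle v\mid w\in[\mu],\,v\in[\phi]\}$. Invoking $|w\triangle v|=d_\hamming(v,w)$ turns this into: $w\triangle v\in[\beta_{\phi,\mu}]$ exactly when $d_\hamming(v,w)=m$, where $m=\min_{w\in[\mu]}\min_{v\in[\phi]}d_\hamming(v,w)$ is the least achievable such distance.

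It then remains to match the cardinality condition $d_\hamming(v,w)=m$ against the argmin membership. For the forward direction, if $w\triangle v\in[\beta_{\phi,\mu}]$ then $d_\hamming(v,w)=m$; hence $\min_{v'\in[\phi]}d_\hamming(v',w)\le m$, and because $m$ is the global minimum over all $w'\in[\mu]$ this inequality is an equality, so $w\in\mathrm{argmin}_{w'\in[\mu]}\min_{v'\in[\phi]}d_\hamming(v',w')$. For the converse, if $w$ attains this argmin then $\min_{v'\in[\phi]}d_\hamming(v',w)=m$; choosing $v\in[\phi]$ to witness this inner minimum gives $|w\triangle v|=d_\hamming(v,w)=m$, so $w\triangle v$ is among the minimal-cardinality elements and thus lies in $[\beta_{\phi,\mu}]$.

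The point requiring care is the status of $v$: the correspondence genuinely pairs an optimal $w\in[\mu]$ with a model of $\phi$ witnessing the inner minimum $\min_{v'\in[\phi]}d_\hamming(v',w)$, rather than treating $v$ as freely quantified. An arbitrary $v\in[\phi]$ with $d_\hamming(v,w)>m$ produces $w\triangle v\notin[\beta_{\phi,\mu}]$ even when $w$ is optimal, which is why the backward direction must select $v$ as a minimizer; this is exactly the pairing that the subsequent flip-by-$v$-and-meet-$\mu$ step of $\mathrm{R}_{\BoB}$ will exploit to read the optimal models of $\mu$ back off $\beta_{\phi,\mu}$. Apart from this bookkeeping the argument reduces entirely to Lemma~\ref{lem:empty-set-chooses-well} together with the identity $|w\triangle v|=d_\hamming(v,w)$, so I expect no further difficulty at this stage.
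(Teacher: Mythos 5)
Your proposal is correct and follows essentially the same route as the paper's proof: apply Lemma~\ref{lem:empty-set-chooses-well} to the disjunction $\bigvee_{v\in[\phi]}f_{v}(\mu)$, whose models are the symmetric differences $\{w\triangle v\mid w\in[\mu],\,v\in[\phi]\}$, and then convert minimal cardinality into minimal Hamming distance via $|w\triangle v|=d_\hamming(v,w)$. Your closing remark that the biconditional only holds when $v$ is read existentially (as a witness of the inner minimum $\min_{v'\in[\phi]}d_\hamming(v',w)$) makes explicit a point the paper's two-line proof glosses over, and it matches how the lemma is actually used in Theorem~\ref{th:characterization-dalal} (``$w_i\triangle v_j\in[\beta_{\phi,\mu}]$, for some $v_j\in[\phi]$'').
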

\begin{proof}
	By Lemma \ref{lem:empty-set-chooses-well}, $[\beta_{\phi,\mu}]$ chooses 
	exactly those interpretations $w_i\triangle v_j$, for $w_i\in[\mu]$ and $v_j\in[\phi]$,
	that are of minimal cardinality. Since $|w_i\triangle v_j|=d_\hamming(w_i, v_j)$,
	the conclusion follows immediately.
\end{proof}

\noindent
By Lemma \ref{lem:any-chooses-well}, the result of the Dalal operator $\circ^{d_\hamming,\,\min}$
applied to $\phi$ and $\mu$ consists of those interpretations $w\in[\mu]$ such 
that $w\triangle v\in[\beta_{\phi,\mu}]$, for some $v\in[\phi]$.
The Best-of-Best postulate $\mathrm{R}_{\BoB}$ instructs us that these are exactly the 
models of $\mu$ that should be chosen by an operator $\circ$,
and provides the final piece in the sought after characterization. 

\begin{theorem}\label{th:characterization-dalal}
	A revision operator $\circ$ satisfies postulates 
	$\mathrm{R}_{1}$, 
	$\mathrm{R}_{3}$-$\mathrm{R}_{6}$,
	$\mathrm{R}_{\Neutrality}$,
	$\mathrm{R}_{\Addition}$,
	$\mathrm{R}_{\Flipping}$
	and $\mathrm{R}_{\BoB}$
	if and only if $\circ \equiv \circ^{d_\hamming,\,\min}$.
\end{theorem}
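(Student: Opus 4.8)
The statement is a biconditional, so the plan is to establish soundness (the Dalal operator satisfies all the listed postulates) and completeness (any operator satisfying them coincides with $\circ^{d_\hamming,\,\min}$) separately. The completeness direction is essentially assembled already from Lemmas \ref{lem:empty-set-chooses-well}--\ref{lem:any-chooses-well}, so most of the remaining work is the soundness check, and within it the only non-routine part is the Best-of-Best postulate $\mathrm{R}_{\BoB}$.

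For completeness, suppose $\circ$ satisfies $\mathrm{R}_{1}$, $\mathrm{R}_{3}$--$\mathrm{R}_{6}$, $\mathrm{R}_{\Neutrality}$, $\mathrm{R}_{\Addition}$, $\mathrm{R}_{\Flipping}$ and $\mathrm{R}_{\BoB}$, and fix formulas $\phi,\mu$. These hypotheses include the hypotheses of Lemma \ref{lem:any-chooses-well}, so that lemma applies: for $w\in[\mu]$, there is some $v\in[\phi]$ with $w\triangle v\in[\beta_{\phi,\mu}]$ iff $w\in\mathrm{argmin}_{w\in[\mu]}\min_{v\in[\phi]}d_\hamming(v,w)=[\phi\circ^{d_\hamming,\,\min}\mu]$. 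I would then unwind $\mathrm{R}_{\BoB}$ using $[f_{v}(\beta_{\phi,\mu})]=\{u\triangle v\mid u\in[\beta_{\phi,\mu}]\}$: a world $w$ lies in $[\bigvee_{v\in[\phi]}f_{v}(\beta_{\phi,\mu})]$ exactly when $w\triangle v\in[\beta_{\phi,\mu}]$ for some $v\in[\phi]$. Intersecting with $[\mu]$ and applying the equivalence above yields $[(\bigvee_{v\in[\phi]}f_{v}(\beta_{\phi,\mu}))\land\mu]=[\phi\circ^{d_\hamming,\,\min}\mu]$, and since $\mathrm{R}_{\BoB}$ identifies the left-hand side with $[\phi\circ\mu]$, we get $\circ\equiv\circ^{d_\hamming,\,\min}$.

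For soundness I would verify each postulate against the definition of $\circ^{d_\hamming,\,\min}$. Postulates $\mathrm{R}_{1}$ and $\mathrm{R}_{3}$--$\mathrm{R}_{6}$ are the KM postulates and hold for the Dalal operator \cite{KatsunoM1992}. Postulates $\mathrm{R}_{\Neutrality}$ and $\mathrm{R}_{\Flipping}$ reduce to the invariance of Hamming distance under the relevant transformations: a renaming $r$ is a bijection with $d_\hamming(r(v),r(w))=d_\hamming(v,w)$, and flipping satisfies $d_\hamming(f_{u}(x),f_{u}(y))=|(x\triangle u)\triangle(y\triangle u)|=|x\triangle y|=d_\hamming(x,y)$; either transformation therefore permutes $[\mu]$ while preserving all pairwise distances, hence commutes with the $\mathrm{argmin}$ defining $\circ^{d_\hamming,\,\min}$. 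For $\mathrm{R}_{\Addition}$ (with $\phi=\phi_{v}$ complete and a nonempty set $x$ of atoms occurring in neither $\phi$ nor $\mu$) I would use monotonicity: $d_\hamming(v,w_{2}\cup x)=d_\hamming(v,w_{2})+|x|$, so a world $w_{1}$ that is weakly preferred to $w_{2}$ from $v$ becomes strictly preferred to $w_{2}\cup x$, leaving $\mu_{w_{1}}$ as the unique result.

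The remaining case, soundness of $\mathrm{R}_{\BoB}$, is where the bookkeeping lives and is the step I expect to be most delicate. Having verified the other postulates for $\circ^{d_\hamming,\,\min}$, I can invoke Lemma \ref{lem:any-chooses-well} with $\circ=\circ^{d_\hamming,\,\min}$ to characterize $[\beta_{\phi,\mu}]$ as the symmetric differences $w\triangle v$ (for $w\in[\mu]$, $v\in[\phi]$) of minimum cardinality, i.e.\ those with $d_\hamming(v,w)$ equal to the global minimum $m=\min_{w\in[\mu],v\in[\phi]}d_\hamming(v,w)$. Flipping back by each $v\in[\phi]$ and intersecting with $\mu$ then reproduces exactly $\{w\in[\mu]\mid \min_{v\in[\phi]}d_\hamming(v,w)=m\}=[\phi\circ^{d_\hamming,\,\min}\mu]$, which is the content of $\mathrm{R}_{\BoB}$. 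The point to handle carefully is that membership $w\triangle v\in[\beta_{\phi,\mu}]$ depends only on the cardinality $|w\triangle v|=d_\hamming(v,w)$ and not on which $v$ realizes it, so the two successive flips (by $v$ inside $\beta_{\phi,\mu}$, then by some $v'$ in $\mathrm{R}_{\BoB}$) correctly recover a genuine model of $\mu$; once this is tracked, everything collapses to cardinality counting and the definition of the $\mathrm{argmin}$.
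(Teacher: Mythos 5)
Your proposal is correct and follows essentially the same route as the paper: both directions hinge on Lemma \ref{lem:any-chooses-well}, with soundness obtained by first verifying the remaining postulates for $\circ^{d_\hamming,\,\min}$ so that the lemma can be applied to the Dalal operator itself, and completeness obtained by unwinding $\mathrm{R}_{\BoB}$ via $[f_v(\beta_{\phi,\mu})]=\{u\triangle v\mid u\in[\beta_{\phi,\mu}]\}$ and the cardinality characterization of $[\beta_{\phi,\mu}]$. If anything, you are more explicit than the paper's terse proof on two points it glosses over: the order of verification needed to avoid circularity when invoking the lemma for soundness of $\mathrm{R}_{\BoB}$, and the ``aliasing'' issue that membership in $[\beta_{\phi,\mu}]$ depends only on the cardinality $|w\triangle v|$, not on which pair realizes it.
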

\begin{proof}
	For one direction, 
	we take as known that the Dalal operator $\circ^{d_\hamming,\,\min}$ 
	satisfies postulates
	$\mathrm{R}_{1}$, 
	$\mathrm{R}_{3-6}$ \cite{KatsunoM1992}
	and $\mathrm{R}_{\Neutrality}$ \cite{HaretW2019}.
	For postulate $\mathrm{R}_{\Neutrality}$,
	given Lemma \ref{lem:any-chooses-well}, satisfaction of postulates 	
	$\mathrm{R}_{\Neutrality}$,
	$\mathrm{R}_{\Addition}$,
	$\mathrm{R}_{\Flipping}$
	and $\mathrm{R}_{\BoB}$ follows straightforwardly.

	For the other direction, 
	we have to show that if $\circ$ satisfies all the stated postulates,
	then $[\phi\circ\mu]=\mathrm{argmin}_{w\in[\mu]}\min_{v\in[\phi]}d_\hamming(v, w)$,
	for any formulas $\phi$ and $\mu$.
	Lemma \ref{lem:any-chooses-well} already gives us that $\beta_{\phi,\mu}$
	selects those interpretations $w_i\triangle v_j$ for which $d_\hamming(w_i, v_j)$
	is minimal among the set $\{w\triangle v\mid w\in[\mu], v\in[\phi]\}$ of 
	symmetric differences between models of $\phi$ and of $\mu$.
	This means that if $w_i\in\mathrm{argmin}_{w\in[\mu]}\min_{v\in[\phi]}d_\hamming(v, w)$, 
	then $w_i\triangle v_j\in[\beta_{\phi,\mu}]$, for some $v_j\in[\phi]$, 
	and hence $(w_i\triangle v_j)\triangle v_j=w_i\in [f_{v_j}(\beta_{\phi,\mu})]$, 
	i.e., if $w_i$ is selected by the Dalal operator then it shows up in  
	$[(\bigvee_{v\in[\phi]} f_{v}(\beta_{\phi,\mu}))\land\mu]$.
	Conversely, suppose there is an interpretation $w_i\in[(\bigvee_{v\in[\phi]} f_{v}(\beta_{\phi,\mu}))\land\mu]$
	that is not at minimal distance to $\phi$. 
	This means that $w_i = (w_j\triangle v_k)\triangle v_l$, where $w_j\in[\mu]$
	corresponds to a model of $\mu$ that is at minimal Hamming distance to $\phi$ and $v_k,v_l\in[\phi]$.
	We infer from this that $w_i\triangle v_l = ((w_j\triangle v_k)\triangle v_l)\triangle v_l = w_j\triangle v_k$,
	and thus $|w_i\triangle v_l| = |w_j\triangle v_k|$. But this contradicts the assumed minimality of $w_j\triangle v_k$.
\end{proof}

\noindent 
Note that postulate $\mathrm{R}_{2}$ is not present in Theorem \ref{th:characterization-dalal}, 
even though the Dalal operator satisfies it, as it follows from the other postulates.

Theorem \ref{th:characterization-dalal} can be read not just as a characterization of the Dalal operator, 
but also as a recipe, or a step-by-step argument, 
for constructing $\phi\circ\mu$ from a set of simpler problems, 
in a srquence of steps guided
by the transformations inherent in postulates 
$\mathrm{R}_{\Neutrality}$,
$\mathrm{R}_{\Addition}$,
$\mathrm{R}_{\Flipping}$
and $\mathrm{R}_{\BoB}$.
The form such an argument could take is illustrated in the following example.

\begin{figure}
	\centering
	\begin{tikzpicture}
		\node at (0,0.7)(a){$a$};
		\node at (1.5,1.2)(ac){$ac$};
		\node at (1.5, 0.2)(abc){$abc$};
		\path 
			(a)edge node[above]{\tiny $1$}(ac)
			(a)edge node[below]{\tiny $2$}(abc);

		\node at (0,-0.7)(b){$b$};
		\node at (1.5,-0.2)(ac2){$ac$};
		\node at (1.5, -1.2)(abc2){$abc$};
		\path 
			(b)edge node[above]{\tiny $3$}(ac2)
			(b)edge node[below]{\tiny $2$}(abc2);

		\node at (6,0)(e){$\emptyset$};

		\node at (4.5, 1.2)(a){$\bm{c}$};
		\node at (4.5, 0.2)(bc){$bc$};
		\node at (4.5, -0.2)(abc3){$abc$};
		\node at (4.5, -1.2)(ac3){$ac$};
		\path 
			(e)edge node[above]{\tiny $1$}(a)
			(e)edge node[above]{\tiny $2$}(bc)
			(e)edge node[below]{\tiny $3$}(abc3)
			(e)edge node[below]{\tiny $2$}(ac3)
			;
		
		\path[-latex,  dotted]
			(ac)edge node[above]{\tiny flip $a$}(a)
			(abc)edge node[above]{\tiny flip $a$}(bc)
			(ac2)edge node[below]{\tiny flip $b$}(abc3)
			(abc2)edge node[below]{\tiny flip $b$}(ac3)
			;
	\end{tikzpicture}
	\caption{
		By flipping the atoms of $v\in[\phi]$ in a model $w$ of $\mu$ 
		we get an interpretation $w\triangle v$ whose size corresponds to the 
		Hamming distance between $v$ and $w$, 
		i.e., $|v\triangle w|=d_\hamming(v, w)=d_\hamming(\emptyset, v\triangle w)=d_\hamming(\emptyset, f_v(w))$.
		In this way, flipped models that get chosen by $\epsilon$ 
		corresponds to models of $\mu$ that minimize overall Hamming distance to $\phi$.
	}
	\label{fig:dalal-postulates}
\end{figure}
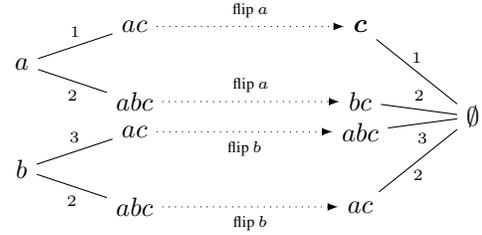
\begin{example}\label{ex:dalal-postulates}
	Consider formulas $[\phi]=\{a,b\}$ and $[\mu]=\{ac,abc\}$
	and note, first, that $[\phi \circ^{d_\hamming,\,\min} \mu]=\{ac\}$, 
	as $ac$ minimizes overall distance to $\phi$ \emph{via} $d_\hamming(a, ac)=1$.
	Assume, however, that we are given a revision operator $\circ$ 
	that is not defined using distances, 
	but is presented only as satisfying postulates
	$\mathrm{R}_{1}$, 
	$\mathrm{R}_{3-6}$,
	$\mathrm{R}_{\Neutrality}$,
	$\mathrm{R}_{\Addition}$,
	$\mathrm{R}_{\Flipping}$
	and $\mathrm{R}_{\BoB}$.
	An agent revising according to $\circ$ can use the postulates to work its way toward 
	$[\phi \circ^{d_\hamming,\,\min} \mu]$ without knowing anything about distances.
	This can be done by, first, splitting the problem into two revision problems, 
	one for each model of $\phi$: $\phi_a\circ\mu$ and $\phi_{b}\circ\mu$, 
	where $[\phi_a]=\{a\}$ and $[\phi_{b}]=\{b\}$.
	The next step consists in reducing both problems to the common denominator
	of revising with prior belief $\epsilon$, where $[\epsilon]=\{\emptyset\}$.
	This is done by flipping $a$ and $b$, respectively, in the two problems, to 
	obtain the revision scenarios $\epsilon\circ f_{a}(\mu)$ and $\epsilon\circ f_{b}(\mu)$, 
	with $[f_a(\mu)]=\{f_a(ac),f_{a}(abc)\}=\{ac\triangle a, abc \triangle a\}=\{c, bc\}$ and, 
	likewise, $[f_{b}(\mu)]=\{abc,ac\}$ (see Figure \ref{fig:dalal-postulates}).
	This move preserves Hamming distances in a crucial way: 
	to take one instance, $d_\hamming(a, ac)=1$, 
	where $a\in[\phi]$ and $ac\in[\mu]$,
	coincides with the Hamming distance between $\emptyset$ and $f_a(ac)=c$,
	and this distance coincides with the number of atoms in $f_a(ac)=c$.
	The operator $\circ$, of course, knows nothing of this: 
	it performs these transformations solely because 
	postulate $\mathrm{R}_{\BoB}$ warrants them.
	Thus, in the next step $\epsilon$ chooses among the models obtained from the successive flips of $\mu$,
	i.e., it solves the revision problem $\epsilon\circ(f_a(\mu)\lor f_b(\mu))$.
	Postulates 
	$\mathrm{R}_{1}$,
	$\mathrm{R}_{3}$-$\mathrm{R}_{6}$,
	$\mathrm{R}_{\Neutrality}$
	and 
	$\mathrm{R}_{\Addition}$,
	\emph{via} the argument in Lemma \ref{lem:empty-set-chooses-well},
	dictate that $\epsilon$ chooses the interpretation of minimal cardinality,
	such that $[\beta_{\phi,\mu}] = [\epsilon\circ(f_a(\mu)\lor f_{b}(\mu))]=\{c\}$.
	The result obtained, i.e., interpretation $c$, is the result of flipping the atom $a$ in 
	the interpretation $ac\in[\mu]$:
	to recover $ac$ from $c$, we `reverse' the original flips: one flip 
	by $a$ and one by $b$, to get 
	$[f_a(\beta_{\phi,\mu})\lor f_{b}(\beta_{\phi,\mu})]=\{ac,bc\}$.
	By postulate $\mathrm{R}_{\BoB}$, 
	we have that $[\phi\circ\mu] = [f_a(\beta_{\phi,\mu})\lor f_{b}(\beta_{\phi,\mu})\land\mu]=\{ac\}$,
	i.e., exactly the result produced by the Dalal operator $\circ^{d_\hamming,\,\min}$.
\end{example}

\section{Characterizing the Hamming Distance Min-Max Operator}\label{sec:distance-hamming-max}
The postulates put forward in Section \ref{sec:dalal} for characterizing the Dalal operator
prove their worth in an additional sense,
as they can be put to use, with minimal modifications, in characterizing 
the Hamming distance min-max operator $\circ^{d_{\hamming},\,\max}$.
This is the topic of the current section.
 
Of the newly proposed postulates, the Neutrality, Addition and Flipping postulates 
($\mathrm{R}_{\Neutrality}$, $\mathrm{R}_{\Addition}$ and $\mathrm{R}_{\Flipping}$, respectively)
can be used as stated in Section \ref{sec:dalal}, while the Best-of-Best postulate $\mathrm{R}_{\BoB}$
has to be modified. 
Intuitively, this makes sense: postulates $\mathrm{R}_{\Neutrality}$, $\mathrm{R}_{\Addition}$ and $\mathrm{R}_{\Flipping}$
are used in regulating what happens when the prior information is a complete formula $\phi_v$
(alternatively, for what happens in the ranking that corresponds to the $v$-column in the table of distances, e.g., Table \ref{tab:dalal-hmax}), 
in which case, as per Proposition \ref{prop:complete-equivalence}, all operators presented here coincide,
whereas postulate $\mathrm{R}_{\BoB}$ instructs us how to choose when the prior information consists of more than one model 
(alternatively, across different columns of the table of distances).
Correspondingly, postulate $\mathrm{R}_{\BoB}$ encodes the constraint that revision should pick the best of the best models across all of the $\phi_v$'s, 
for $v\in[\phi]$, but this is not the rule that defines operator $\circ^{d_{\hamming},\,\max}$.
For $\circ^{d_{\hamming},\,\max}$ we need a principle 
that mandates picking the best of the worst models across the $\phi_v$'s.
The key fact allowing us to do this relies on a certain duality specific to the Hamming distance
that will guide us in designing an appropriate postulate 
for $\circ^{d_{\hamming},\,\max}$, 
and which is summarized in the following result.
Recall that $\Atoms$ is the set of all atoms.

\begin{lemma}\label{lem:hamming-duality}
	If $v$ and $w$ are interpretations and $|\Atoms|=n$, 
	then $d_\hamming(v, w) = n-d_\hamming(A\setminus v, w)$.
\end{lemma}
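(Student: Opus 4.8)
The plan is to prove the slightly more symmetric statement $d_\hamming(v,w) + d_\hamming(A\setminus v, w) = n$, from which the claimed equality follows immediately by rearranging. Recalling that $d_\hamming(v,w) = |v\triangle w|$, this amounts to showing $|v\triangle w| + |(A\setminus v)\triangle w| = n = |A|$.

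The key observation I would isolate first is the set identity
$$(A\setminus v)\triangle w = A\setminus(v\triangle w),$$
which says that complementing $v$ within $A$ exactly complements (within $A$) the set of atoms on which $v$ and $w$ disagree. I would verify it by a membership argument: for an atom $p\in A$, the condition $p\in(A\setminus v)\triangle w$ holds precisely when exactly one of ``$p\notin v$'' and ``$p\in w$'' holds, which by the elementary equivalence $\lnot a \leftrightarrow (a\oplus 1)$ is the negation of the condition that exactly one of ``$p\in v$'' and ``$p\in w$'' holds; that is, precisely when $p\notin v\triangle w$, i.e. $p\in A\setminus(v\triangle w)$.

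Given the identity, the conclusion is immediate. Since $v\triangle w\subseteq A$, we have $|(A\setminus v)\triangle w| = |A\setminus(v\triangle w)| = n - |v\triangle w|$, and hence $d_\hamming(A\setminus v, w) = n - d_\hamming(v,w)$, which rearranges to $d_\hamming(v,w) = n - d_\hamming(A\setminus v, w)$, as required.

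I expect the only point requiring care — rather than a genuine obstacle, since the result is essentially a counting fact — to be the membership argument for the set identity. A reader who prefers to avoid the $\oplus$ formulation can instead tabulate the four cases for the pair $(p\in v,\, p\in w)$ and check directly that each atom $p\in A$ contributes exactly $1$ to the sum $|v\triangle w| + |(A\setminus v)\triangle w|$; summing over all $n$ atoms then yields the total $n$ at once.
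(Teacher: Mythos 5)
Your proof is correct. The paper itself states Lemma~\ref{lem:hamming-duality} without any proof, treating it as an elementary counting fact, so there is no argument of the paper's to compare against; your derivation via the set identity $(A\setminus v)\triangle w = A\setminus(v\triangle w)$, together with $|A\setminus(v\triangle w)| = n - |v\triangle w|$, is a complete and clean justification of exactly the statement claimed (and your per-atom tabulation remark is essentially the implicit reasoning the paper relies on).
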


\noindent
Intuitively, Lemma \ref{lem:hamming-duality} implies that the further away $w$ is from $v$
(in terms of Hamming distance),
the closer $w$ is to $\Atoms \setminus v$.
In particular, we can infer that:
\begin{align}\label{eq:hamming-duality}
	d_\hamming(v, w) & = d_\hamming(\emptyset, |v\triangle w|)\nonumber\\
					 & = d_\hamming(\emptyset, f_v(w))\nonumber\\ 
					 & = n - d_\hamming(A, f_v(w)).
\end{align}

\noindent 
Hence, $w\in[\mu]$ is among the models of $\mu$ at maximal Hamming distance to $v$
if and only if $f_{v}(w)$ is, among the models of $f_{v}(\mu)$, 
the closest to $A$, 
or, more intuitively,
the worst model of $\mu$ according to $v$ is the best model of $f_v(\mu)$ according to $\alpha$, 
where $[\alpha]=A$.
We can thus define the 
\emph{best-of-worst formula $\gamma_{\phi, \mu}$ with respect to $\phi$ and $\mu$} as:
$$
	\gamma_{\phi,\mu} = \epsilon\circ\bigg(\bigvee_{v\in[\phi]}\Big(\alpha\circ f_{v}(\mu)\Big)\bigg),
$$
i.e., as the result of revising the null formula $\epsilon$ 
by a disjunction made up of the results obtained from a sequence of 
revisions of the full formula $\alpha$.
In this sequence $\alpha$ is revised, in turn, by $f_v(\mu)$, for every model $v\in[\phi]$.

Thus, similarly as for $\beta_{\phi,\mu}$ from Section \ref{sec:dalal}, 
$\gamma_{\phi,\mu}$ simulates the process of going through the table of Hamming distances 
(e.g., Table \ref{tab:dalal-hmax}), 
except that in this case we are interested in 
($i$) selecting the worst elements according to each $\phi_v$, for $v\in[\phi]$, 
an operation reflected by the revision $\alpha\circ f_v(\mu)$,
and 
($ii$) selecting the best among these worst elements, 
an operation reflected by submitting the results obtained previously 
to $\epsilon$ for an additional round of revision.
A bespoke postulate, called the \emph{Best-of-Worst} postulate $\mathrm{R}_{\BoW}$, 
recovers the models of $\mu$ from the models of $\gamma_{\phi,\mu}$:

\begin{description}
	\item[($\mathrm{R}_{\BoW}$)] $\phi\circ\mu\equiv\bigg(\bigvee_{v\in[\phi]} f_{v}(\gamma_{\phi,\mu})\bigg)\land\mu$.
\end{description}

\noindent
Postulate $\mathrm{R}_{\BoW}$ stipulates that the result of revising $\phi$ by $\mu$
consists of those models of $\mu$ that come out of flipping $\gamma_{\phi,\mu}$
by each model of $\phi$, in this way reversing the initial flips that delivered the 
revision formula posed to $\epsilon$.

The proof that the postulates put forward 
actually characterize the $\circ^{d_{\hamming},\,\max}$ operator
hinges on $\gamma_{\phi,\mu}$ selecting interpretations
corresponding to models $w$ of $\mu$ that minimize maximal Hamming distance to $\phi$.

\begin{lemma}\label{lem:dmax-chooses-well}
	If $\circ$ is a revision operator that satisfies postulates 
	$\mathrm{R}_{1}$, 
	$\mathrm{R}_{3}$-$\mathrm{R}_{6}$,
	$\mathrm{R}_{\Neutrality}$,
	$\mathrm{R}_{\Addition}$,
	$\mathrm{R}_{\Flipping}$
	and $\mathrm{R}_{\BoW}$,
	then, for any formulas $\phi$ and $\mu$
	and interpretations $w$ and $v$,
	it holds that $w\triangle v \in [\gamma_{\phi,\mu}]$ 
	if and only if 
	$w\in\mathrm{argmin}_{w\in[\mu]}\max_{v\in[\phi]}d_\hamming(v, w)$.
\end{lemma}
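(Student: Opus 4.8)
The plan is to prove this lemma in close analogy with Lemma~\ref{lem:any-chooses-well}, the corresponding statement for the Dalal operator, but with an extra inner layer of revision that accounts for the maximization step. The formula $\gamma_{\phi,\mu}$ is assembled from two nested revisions---an inner family $\alpha\circ f_v(\mu)$, one for each $v\in[\phi]$, and a single outer revision by $\epsilon$---so I would analyze each layer separately, using the facts already established for complete and empty priors, and only then stitch them together through the Hamming duality.

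First I would unpack the inner revisions. Since $\alpha$ is complete with $[\alpha]=A$, Lemma~\ref{lem:complete-chooses-well} applies and yields $[\alpha\circ f_v(\mu)]=\mathrm{argmin}_{u\in[f_v(\mu)]}d_\hamming(A,u)$. Because $[f_v(\mu)]=\{w\triangle v\mid w\in[\mu]\}$, I would then invoke the duality recorded in Lemma~\ref{lem:hamming-duality} and Equation~(\ref{eq:hamming-duality}), namely $d_\hamming(A,f_v(w))=n-d_\hamming(v,w)$ with $n=|A|$, to convert this minimization into a maximization: a model $f_v(w)=w\triangle v$ is closest to $A$ exactly when $w$ is farthest from $v$. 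Hence $[\alpha\circ f_v(\mu)]$ consists precisely of the flipped interpretations $w\triangle v$ for which $w$ is a worst model of $\mu$ from the standpoint of $v$, and each such interpretation has cardinality $|w\triangle v|=d_\hamming(v,w)$.

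Next I would handle the outer revision. The disjunction $\bigvee_{v\in[\phi]}\bigl(\alpha\circ f_v(\mu)\bigr)$ collects these flipped worst-case interpretations across all $v\in[\phi]$, and revising by $\epsilon$ selects, by Lemma~\ref{lem:empty-set-chooses-well}, exactly those of minimal cardinality. Translating cardinalities back into distances through $|w\triangle v|=d_\hamming(v,w)$, the outer minimal-cardinality choice is intended to realize the outer minimization of the nested min-max, so that $w\triangle v\in[\gamma_{\phi,\mu}]$ corresponds to $w\in\mathrm{argmin}_{w\in[\mu]}\max_{v\in[\phi]}d_\hamming(v,w)$. The two directions of the biconditional would then fall out by reading this correspondence forwards and backwards, just as in the proof of Lemma~\ref{lem:any-chooses-well}.

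The step I expect to be the main obstacle is the bookkeeping of the nested quantifiers. I have to verify that the per-$v$ inner maximization---realized by the duality that turns $\alpha$'s nearest-neighbour choice into a farthest-neighbour choice---composes with the outer $\epsilon$-driven minimization so that the surviving cardinalities genuinely encode $\min_{w}\max_{v}d_\hamming(v,w)$, and not some other interleaving of the $\min$ and the $\max$. Concretely, I must keep each flip $f_v$ and its inverse aligned across both layers, so that the interpretation ultimately retained really is of the form $w\triangle v$ with the intended $w$ and $v$, and so that every min-max model $w$ is witnessed in $[\gamma_{\phi,\mu}]$ while no sub-optimal $w$ slips in. Getting this interaction between the inner $\max$ and the outer $\min$ exactly right is where the argument is most delicate.
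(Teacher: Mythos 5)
Your plan follows the paper's own proof route step for step: the inner revisions are handled exactly as you describe (Lemma~\ref{lem:complete-chooses-well} applied to the complete formula $\alpha$, then the duality of Lemma~\ref{lem:hamming-duality} turning nearest-to-$\Atoms$ into farthest-from-$v$), and the outer revision is handled by Lemma~\ref{lem:empty-set-chooses-well}. But the step you single out as the main obstacle is not mere bookkeeping: it fails. The set handed to $\epsilon$ is $S=\{w\triangle v \mid v\in[\phi],\ w\in\mathrm{argmax}_{w'\in[\mu]}d_\hamming(v,w')\}$, and each element $w\triangle v\in S$ has cardinality $|w\triangle v|=\max_{w'\in[\mu]}d_\hamming(v,w')$, a value that depends on $v$ alone. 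Minimizing cardinality over $S$ therefore computes $\min_{v\in[\phi]}\max_{w\in[\mu]}d_\hamming(v,w)$ --- the best prior model's worst case --- and not $\min_{w\in[\mu]}\max_{v\in[\phi]}d_\hamming(v,w)$, which is what the lemma asserts. These two interleavings of $\min$ and $\max$ genuinely differ, so no amount of careful alignment of flips will close the gap.

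The paper's own running example witnesses the failure. Take $[\phi]=\{\emptyset,abcd\}$ and $[\mu]=\{\emptyset,abcd,abe\}$ from Table~\ref{tab:dalal-hmax}, over $\Atoms=\{a,b,c,d,e\}$. Lemma~\ref{lem:complete-chooses-well} forces $[\alpha\circ f_\emptyset(\mu)]=\{abcd\}$ and $[\alpha\circ f_{abcd}(\mu)]=\{abcd\}$ (in both cases $abcd$ is the flipped model closest to $abcde$), hence $[\gamma_{\phi,\mu}]=\{abcd\}$; yet $\mathrm{argmin}_{w\in[\mu]}\max_{v\in[\phi]}d_\hamming(v,w)=\{abe\}$ (the min-max values are $4$, $4$, $3$). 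So $abcd=abcd\triangle\emptyset\in[\gamma_{\phi,\mu}]$ although $abcd$ is not a min-max model, and $abe=abe\triangle\emptyset\notin[\gamma_{\phi,\mu}]$ although $abe$ is one: both directions of the biconditional break. Note that $[\gamma_{\phi,\mu}]$ is pinned down here by the postulates other than $\mathrm{R}_{\BoW}$, so assuming $\mathrm{R}_{\BoW}$ cannot rescue the argument; at best the lemma could hold vacuously, if no operator satisfies the full postulate list, but that is not what your proof (or the paper's, which makes exactly the same leap in its final sentence) establishes. To make the strategy sound, the inner revisions would have to range over models of $\mu$ rather than of $\phi$ --- e.g., $\alpha\circ f_w(\phi)$ for each $w\in[\mu]$, which selects the worst $v$ for fixed $w$ --- so that the outer $\epsilon$-minimization genuinely realizes $\min_{w}\max_{v}$.
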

\begin{proof}
	Using postulates 
	$\mathrm{R}_{\Neutrality}$,
	$\mathrm{R}_{\Addition}$ and
	$\mathrm{R}_{\Flipping}$
	we can prove that $\alpha$ selects the models
	of $\mu$ that minimize Hamming distance to $A$, 
	in a way completely analogous to 
	Lemmas \ref{lem:empty-set-chooses-well} and Lemma \ref{lem:complete-chooses-well}.
	Thus, using Equality \ref{eq:hamming-duality},
	$\alpha\circ f_v(\mu)$ selects interpretations $w\triangle v$
	such that $d_\hamming(v, w)=\max_{w'\in[\mu]}d_\hamming(v, w')$.
	Then, using Lemma \ref{lem:empty-set-chooses-well},
	we obtain that $\gamma_{\phi,\mu}$ selects interpretations 
	$w\triangle v$ where $w$ minimizes max-distance to $\phi$.
\end{proof}

\noindent 
With Lemma \ref{lem:dmax-chooses-well} the characterization of $\circ^{d_{\hamming},\,\max}$ follows immediately.

\begin{theorem}\label{th:characterization-dhmax}
	If $\circ$ is a revision operator, 
	then $\circ$ satisfies postulates 
	$\mathrm{R}_{1}$, 
	$\mathrm{R}_{3}$-$\mathrm{R}_{6}$,
	$\mathrm{R}_{\Neutrality}$,
	$\mathrm{R}_{\Addition}$,
	$\mathrm{R}_{\Flipping}$
	and $\mathrm{R}_{\BoW}$
	iff 
	$\circ \equiv \circ^{d_\hamming,\,\max}$.
\end{theorem}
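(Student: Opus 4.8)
The plan is to mirror the structure of Theorem~\ref{th:characterization-dalal} for the min-max setting, exploiting the fact that Lemma~\ref{lem:dmax-chooses-well} already does the heavy lifting. For the forward direction I would take as known from the literature \cite{HaretW2019,KatsunoM1992} that $\circ^{d_\hamming,\,\max}$ satisfies $\mathrm{R}_{1}$, $\mathrm{R}_{3}$--$\mathrm{R}_{6}$, and then argue that satisfaction of $\mathrm{R}_{\Neutrality}$, $\mathrm{R}_{\Addition}$ and $\mathrm{R}_{\Flipping}$ follows because these postulates only constrain behaviour on complete prior formulas $\phi_v$, and by Proposition~\ref{prop:complete-equivalence} the operator $\circ^{d_\hamming,\,\max}$ coincides with the Dalal operator there, which is already known to satisfy them. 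For $\mathrm{R}_{\BoW}$ one invokes the ``only if'' content of Lemma~\ref{lem:dmax-chooses-well}: since $\circ^{d_\hamming,\,\max}$ by definition selects exactly the models $w\in[\mu]$ minimizing $\max_{v\in[\phi]}d_\hamming(v,w)$, and these correspond to $w\triangle v\in[\gamma_{\phi,\mu}]$, flipping back by each $v\in[\phi]$ and intersecting with $\mu$ recovers precisely $[\phi\circ^{d_\hamming,\,\max}\mu]$.

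For the converse direction, suppose $\circ$ satisfies all seven postulates. The goal is to show $[\phi\circ\mu]=\mathrm{argmin}_{w\in[\mu]}\max_{v\in[\phi]}d_\hamming(v,w)$ for arbitrary $\phi$ and $\mu$. I would apply postulate $\mathrm{R}_{\BoW}$ to rewrite $[\phi\circ\mu]$ as $[(\bigvee_{v\in[\phi]}f_v(\gamma_{\phi,\mu}))\land\mu]$, and then feed this through the characterization of $\gamma_{\phi,\mu}$ supplied by Lemma~\ref{lem:dmax-chooses-well}. The two inclusions then proceed exactly as in the Dalal proof: if $w_i$ minimizes max-distance to $\phi$, then $w_i\triangle v_j\in[\gamma_{\phi,\mu}]$ for some $v_j\in[\phi]$, whence $(w_i\triangle v_j)\triangle v_j = w_i$ appears in $[f_{v_j}(\gamma_{\phi,\mu})]$ and survives the conjunction with $\mu$; conversely, any $w_i$ in the right-hand side has the form $(w_j\triangle v_k)\triangle v_l$ with $w_j$ min-max optimal, and the involution property of $\triangle$ forces $w_i\triangle v_l = w_j\triangle v_k$, so $w_i$ is itself min-max optimal. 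Since Lemma~\ref{lem:dmax-chooses-well} assumes precisely the postulates available, this closes the argument.

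The main obstacle, and the reason Lemma~\ref{lem:dmax-chooses-well} is stated separately, lies in establishing that $\gamma_{\phi,\mu}$ correctly captures the best-of-worst interpretations. The subtlety is the two-stage construction: the inner revisions $\alpha\circ f_v(\mu)$ must be shown to select the \emph{worst} models of $\mu$ relative to each $v$, which is where the Hamming duality of Lemma~\ref{lem:hamming-duality} and Equality~\eqref{eq:hamming-duality} are essential, reducing a maximization to a minimization around $A$ that the Neutrality--Addition--Flipping machinery can handle; the outer revision by $\epsilon$ then selects the best among these via Lemma~\ref{lem:empty-set-chooses-well}. Once Lemma~\ref{lem:dmax-chooses-well} is in hand, however, Theorem~\ref{th:characterization-dhmax} itself requires only the bookkeeping of $\mathrm{R}_{\BoW}$ together with the symmetric-difference cancellation, so I expect the theorem proof proper to be short, with all genuine difficulty already absorbed into the preceding lemma.
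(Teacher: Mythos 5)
Your plan follows exactly the route the paper gestures at (the paper omits this proof as ``similar, in its essentials'' to Theorem~\ref{th:characterization-dalal}, with Lemma~\ref{lem:dmax-chooses-well} doing the heavy lifting), but the step you defer to the lemma is precisely where the argument breaks, in two ways. First, a circularity: Lemma~\ref{lem:dmax-chooses-well} lists $\mathrm{R}_{\BoW}$ among its hypotheses, so you cannot invoke it to show that $\circ^{d_\hamming,\,\max}$ satisfies $\mathrm{R}_{\BoW}$ in the forward direction; there you must instead compute $[\gamma_{\phi,\mu}]$ directly from the distance-based definition (which is possible, since every revision occurring inside $\gamma_{\phi,\mu}$ has the complete prior $\epsilon$ or $\alpha$). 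Second, and fatally, that direct computation does not yield what you assert. The inner revisions $\alpha\circ f_v(\mu)$ produce, for each $v\in[\phi]$, the flipped \emph{worst} models of $\mu$ according to $v$, and the outer revision by $\epsilon$ keeps those of minimum cardinality; hence $[\gamma_{\phi,\mu}]$ realizes $\min_{v\in[\phi]}\max_{w\in[\mu]}d_\hamming(v,w)$, which is the \emph{transpose} of the quantity $\min_{w\in[\mu]}\max_{v\in[\phi]}d_\hamming(v,w)$ defining the operator. For the Dalal case this mismatch is invisible because $\min$ commutes with $\min$, which is why the Theorem~\ref{th:characterization-dalal} template works there; $\max$ and $\min$ do not commute.

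Concretely, take $\Atoms=\{a,b,c\}$, $[\phi]=\{\emptyset,a\}$ and $[\mu]=\{\emptyset,abc\}$. Then $[\phi\circ^{d_\hamming,\,\max}\mu]=\{\emptyset\}$, since the max-distances of $\emptyset$ and $abc$ to $\phi$ are $1$ and $3$. However, $[\alpha\circ^{d_\hamming,\,\max}f_{a}(\mu)]=\{bc\}$ and $[\alpha\circ^{d_\hamming,\,\max}f_{\emptyset}(\mu)]=\{abc\}$, so $[\gamma_{\phi,\mu}]=\{bc\}$, and $\mathrm{R}_{\BoW}$ would force $[\phi\circ\mu]=[(f_{a}(\gamma_{\phi,\mu})\lor f_{\emptyset}(\gamma_{\phi,\mu}))\land\mu]=\{abc\}$. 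So $\circ^{d_\hamming,\,\max}$ itself violates $\mathrm{R}_{\BoW}$, and the forward direction cannot be completed with $\gamma_{\phi,\mu}$ as defined; no bookkeeping in the backward direction rescues the equivalence. (The flaw originates in the paper, in the definition of $\gamma_{\phi,\mu}$ and the final sentence of the proof of Lemma~\ref{lem:dmax-chooses-well}, which silently swaps the two orders of aggregation; a blind proof attempt needed to verify this rather than absorb it, and your proposal explicitly absorbs it.) Note, moreover, that the obvious repair $\gamma_{\phi,\mu}=\epsilon\circ(\bigvee_{w\in[\mu]}(\alpha\circ f_{w}(\phi)))$ is still not sufficient: with $[\phi]=\{ab,cd,ef\}$ and $[\mu]=\{\emptyset,abcd\}$, the flip-back $(\bigvee_{v\in[\phi]}f_{v}(\cdot))\land\mu$ admits the non-optimal model $abcd$ as $f_{cd}(ab)$, because for $\max$ (unlike for $\min$) a single distance equal to the optimal value does not certify optimality of the corresponding model of $\mu$. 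This is also exactly why your transplanted cancellation step in the converse inclusion (``$w_i\triangle v_l=w_j\triangle v_k$, so $w_i$ is itself min-max optimal'') is invalid for min-max: it bounds one distance from $w_i$ to $\phi$, not $\max_{v}d_\hamming(v,w_i)$.
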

\noindent 
The proof is similar, in its essentials, to the proof of Theorem~\ref{th:characterization-dalal}
and is therefore omitted. The following example, however, illustrates 
how the mechanism works on a concrete case.

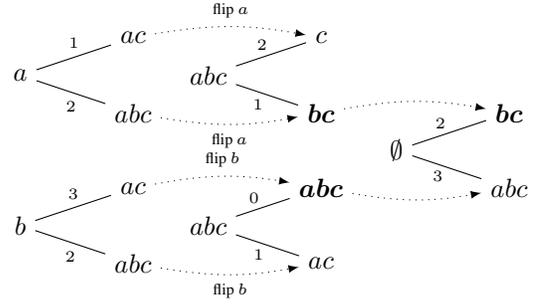
\begin{figure}
	\centering
	\begin{tikzpicture}
		\node at (0,1)(a){$a$};
		\node at (1.5,1.5)(ac){$ac$};
		\node at (1.5, 0.5)(abc){$abc$};
		\path 
			(a)edge node[above]{\tiny $1$}(ac)
			(a)edge node[below]{\tiny $2$}(abc);

		\node at (0,-1)(b){$b$};
		\node at (1.5,-0.5)(ac2){$ac$};
		\node at (1.5, -1.5)(abc2){$abc$};
		\path 
			(b)edge node[above]{\tiny $3$}(ac2)
			(b)edge node[below]{\tiny $2$}(abc2);

		\node at (4, 1.5)(a){$c$};
		\node at (4, 0.5)(bc){$\bm{bc}$};
		\node at (4, -0.5)(abc3){$\bm{abc}$};
		\node at (4, -1.5)(ac3){$ac$};

		\path[-latex,  dotted]
			(ac)edge[bend left=10] node[above]{\tiny flip $a$}(a)
			(abc)edge[bend right=10] node[below]{\tiny flip $a$}(bc)
			(ac2)edge[bend left=10] node[above]{\tiny flip $b$}(abc3)
			(abc2)edge[bend right=10] node[below]{\tiny flip $b$}(ac3)
		;
		\node at (2.5,1)(A1){$abc$};
		\node at (2.5,-1)(A2){$abc$};

		\path
			(a)edge node[above left=-2]{\tiny $2$}(A1)
			(bc)edge node[below left=-2]{\tiny $1$}(A1)
			(abc3)edge node[above left=-2]{\tiny $0$}(A2)
			(ac3)edge node[below left=-2]{\tiny $1$}(A2)
			;

		\node at (6.5, 0.5)(bc2){$\bm{bc}$};
		\node at (6.5, -0.5)(abc4){$abc$};
		\node at (5,0)(e){$\emptyset$};
		\path 
			(e)edge node[above left=-2]{\tiny $2$}(bc2)
			(e)edge node[below left=-2]{\tiny $3$}(abc4)
			;
		\path[-latex,  dotted]
			(bc)edge[bend left=10](bc2)
			(abc3)edge[bend right=10](abc4)
			;
	\end{tikzpicture}
	\caption{
		To get the best of the worst models of $\mu$ according to $a$ and $b$
		we got through two rounds of revision: first, flip $\mu$ by $a$ and by $b$.
		The results of $\alpha\circ f_a(\mu)$ and $\beta_\circ f_b(\mu)$ correspond 
		to the models of $\mu$ at maximal distance to $a$ and $b$, respectively. 
		This result is further refined by passing it to $\epsilon$ for revision.
	}
	\label{fig:hmax-postulates}
\end{figure}
\begin{example}\label{ex:hmax-postulates}
	Consider formulas $[\phi]=\{a,b\}$ and $[\mu]=\{ac,abc\}$,
	as in Example \ref{ex:dalal-postulates}, 
	over the set $\Atoms = \{a,b,c\}$ of atoms.
	Using the $\circ^{d_{\hamming},\,\max}$ operator 
	we obtain that $[\phi\circ^{d_{\hamming},\,\max}\mu]=\{abc\}$, 
	but we can show that a (putatively different) revision operator $\circ$ 
	known only to satisfy the stated postulates 
	arrives at the same conclusion. 
	It does so by first figuring out, 
	using postulates
	$\mathrm{R}_{1}$, 
	$\mathrm{R}_{3}$-$\mathrm{R}_{6}$,
	$\mathrm{R}_{\Neutrality}$,
	$\mathrm{R}_{\Addition}$,
	$\mathrm{R}_{\Flipping}$
 	that
	$[\alpha\circ f_a(\mu)]=\{bc\}$
	and 
	$[\alpha\circ f_b(\mu)]=\{abc\}$, 
	with $\alpha$, in this case, such that $[\alpha]=\{abc\}$
	(see Figure \ref{fig:hmax-postulates} for an illustration).
	At this point, we have obtained the (flipped versions of)
	the models of $\mu$ at maximal Hamming distance to $a$ and $b$, respectively.
	FOllowing this, we get that 
	$[\gamma_{\phi,\mu} = [\epsilon\circ\big((\alpha\circ f_a(\mu))\lor(\alpha\circ f_b(\mu))\big)]=\{bc\}$, 
	where $bc$ was obtained from $abc$ by flipping $a$.
	Postulate $\BoW$ then be recovers $abc$ through an extra flip of $a$.
\end{example}

\section{Characterizing the Hamming Surprise Min-Max Operator}\label{sec:surprise-hamming-max}
Finally, we return to the operator $\circ^{s_{\hamming},\,\max}$ and, 
using the wisdom gained in Section \ref{sec:dalal} and \ref{sec:distance-hamming-max},
provide it with an axiomatic foundation.
In doing so we pursue that same strategy as in the previous sections:
($i$) establish, axiomatically, what the revision result should be in the `base' case in which 
the prior belief is of a simple type, which can be decided by appeal to an argument using appealing notions of symmetry;
($ii$) reduce, axiomatically, an arbitrary instance $\phi\circ\mu$ of revision to the base case, 
in a manner that preserves the result of $\circ^{s_{\hamming},\,\max}$ on the given instance.

The base case for this section consists, 
as for the $\circ^{d_{\hamming},\,\max}$ operator, 
of revision when prior information is either $\epsilon$ or $\alpha$, 
and we want to make sure we employ a set of postulates that deliver the expected result:
since $\circ^{s_{\hamming},\,\max}$ behaves exactly like the Dalal and $\circ^{d_{\hamming},\,\max}$ operators
when prior information is complete, postulates $\mathrm{R}_{\Neutrality}$, $\mathrm{R}_{\Addition}$
and $\mathrm{R}_{\Flipping}$ can be used without modification (the assumption of completeness made in Section \ref{sec:dalal} pays off here).
We can also use the standard postulates $\mathrm{R}_{1}$ and $\mathrm{R}_{3}$-$\mathrm{R}_{4}$, 
which we already know $\circ^{s_{\hamming},\,\max}$ satisfies (see Proposition \ref{prop:surprise-hamming-postulates}).
Postulates $\mathrm{R}_{5}$-$\mathrm{R}_{6}$ are, however, problematic, 
since $\circ^{s_{\hamming},\,\max}$ does not satisfy them in their unrestricted form 
(also Proposition \ref{prop:surprise-hamming-postulates}).
However, the equivalence of $\circ^{s_{\hamming},\,\max}$ with the Dalal and $\circ^{d_{\hamming},\,\max}$ operators 
when prior information is complete means that we can use postulates $\mathrm{R}_{5}$ and $\mathrm{R}_{6}$,
restricted to the case when $\phi$ is complete. The restrictions are denoted $\mathrm{R}^{c}_{5}$ and $\mathrm{R}^{c}_{6}$, respectively.

The next step involves engineering a choice situation focused on $\alpha$ and $\epsilon$
that is equivalent, in terms of what gets chosen, to the mechanics of $\circ^{s_{\hamming},\,\max}$. 
This is done using a few intermediary notions, as follows. 
If $\phi$ and $\mu$ are formulas such that $[\phi]=\{v_1,\dots,v_n\}$,  
the \emph{adjunction interpretations $x_1$, \dots, $x_n$} are
interpretations consisting of completely new atoms such that $|x_i|=d_\hamming(v_i, \mu)$.
For $v_i\in[\phi]$, the \emph{corrected interpretation $v_i^*$} is defined as 
$v_i^* = v_i\cup(x_1\cup\dots x_{i-1}\cup x_{i+1}\cup\dots \cup x_n)$,
i.e., as the result of adding to $v_i$ all the adjunction interpretations, except $x_i$.
Then, the \emph{best-surprise formula $\sigma_{\phi,\mu}$ with respect to $\phi$ and $\mu$} 
is defined as:
$$
	\sigma_{\phi,\mu} = \epsilon\circ\bigg(\bigvee_{v_i\in[\phi]}\Big(\alpha\circ f_{v^*_i}(\mu)\Big)\bigg).
$$
In words, inside the main parenthesis we repeatedly revise $\alpha$ by a flipped version of $\mu$:
one revision for every model $v_i$ of $\phi$, flipping $\mu$ by the atoms in the corrected 
interpretation $v^*_i$.
The disjunction of all these revisions is then passed on to $\alpha$ for another round of revision.

The reasoning behind this definition is that it recasts the surprise min-max revision scenario 
for $[\phi]=\{v_1, \dots, v_n\}$ and $\mu$
into a min-max distance revision scenario 
for $[\phi^*] = \{v^*, \dots, v^*_n\}$
and $\mu$ (which we know how to axiomatize from Section \ref{sec:distance-hamming-max}), 
while keeping the relative ranking of the models of $\mu$ intact.
The following result makes this precise.

\begin{lemma}\label{lem:surprises-to-distances}
	If $\phi$ and $\mu$ are propositional formulas, 
	$v_i, v_k\in[\phi]$ and $w_j,w_\ell\in[\mu]$,
	then $s^{\mu}_{\hamming}(v_i,w_j)\le s_\hamming^\mu(v_k,w_\ell)$
	iff 
	$d_\hamming(v^*_i, w_j)\le d_\hamming(v^*_j, w_\ell)$.
\end{lemma}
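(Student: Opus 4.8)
The plan is to show that $d_\hamming(v^*_i, w_j)$ exceeds $s^\mu_\hamming(v_i, w_j)$ by a constant offset that depends on neither index; the biconditional then follows at once, since adding the same constant to both sides of an inequality preserves it.

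First I would compute $d_\hamming(v^*_i, w_j)$ explicitly. Recall $v^*_i = v_i \cup \bigcup_{m\neq i} x_m$, where the adjunction interpretations $x_1, \dots, x_n$ consist of fresh atoms---pairwise disjoint, and disjoint from the atoms occurring in $\phi$ or $\mu$---with $|x_m| = d_\hamming(v_m, \mu)$. Since $w_j \in [\mu]$, it contains none of these fresh atoms, so the symmetric difference $v^*_i \triangle w_j$ splits into two disjoint parts: the original-atom part $v_i \triangle w_j$, and the fresh-atom part $\bigcup_{m\neq i} x_m$ (each such atom lies in $v^*_i$ but not in $w_j$). Because these parts draw on disjoint pools of atoms, their cardinalities add:
\[
    d_\hamming(v^*_i, w_j) = |v_i \triangle w_j| + \sum_{m\neq i} |x_m| = d_\hamming(v_i, w_j) + \sum_{m\neq i} d_\hamming(v_m, \mu).
\]

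Next I would set $S = \sum_{m=1}^n d_\hamming(v_m, \mu)$ and rewrite $\sum_{m\neq i} d_\hamming(v_m, \mu) = S - d_\hamming(v_i, \mu)$. Substituting and unfolding the definition of surprise yields
\[
    d_\hamming(v^*_i, w_j) = \big(d_\hamming(v_i, w_j) - d_\hamming(v_i, \mu)\big) + S = s^\mu_\hamming(v_i, w_j) + S.
\]
The same identity holds verbatim with $(i, j)$ replaced by $(k, \ell)$, the constant $S$ being shared. Hence $d_\hamming(v^*_i, w_j) \le d_\hamming(v^*_k, w_\ell)$ iff $s^\mu_\hamming(v_i, w_j) + S \le s^\mu_\hamming(v_k, w_\ell) + S$ iff $s^\mu_\hamming(v_i, w_j) \le s^\mu_\hamming(v_k, w_\ell)$, which is the claim.

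The one genuinely delicate step, and thus the one I would verify most carefully, is the disjointness bookkeeping underlying the symmetric-difference decomposition: one must confirm that the fresh atoms are disjoint from $v_i$, from $w_j$, and from one another, so that no cancellation occurs and the cardinalities combine additively. This is precisely the point at which the design of the corrected interpretations $v^*_i$ pays off---the complementary adjunction $\bigcup_{m \neq i} x_m$ contributes exactly $S - d_\hamming(v_i, \mu)$ fresh atoms, implementing the surprise normalization up to the shared constant $S$. Once the disjointness is pinned down, everything else is pure arithmetic.
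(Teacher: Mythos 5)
Your proof is correct and takes essentially the same route as the paper's: both arguments hinge on the observation that passing from $v_i$ to the corrected interpretation $v^*_i$ adds the index-independent constant $S=\sum_{r} d_\hamming(v_r,\mu)$ to the surprise term, so that the two inequalities differ by a shared offset (the paper adds $S$ to both sides of the surprise inequality and reads the result as distances to the $v^*$'s; you compute $d_\hamming(v^*_i,w_j)=s^\mu_\hamming(v_i,w_j)+S$ directly). Your version has the minor merit of making explicit the disjointness bookkeeping (fresh atoms pairwise disjoint and disjoint from $v_i$ and $w_j$) that the paper leaves implicit.
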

\begin{proof}
	Take $[\phi]=\{v_1,\dots, v_n\}$, and $m_i = d_\hamming(v_i, \mu)$, for $v_i\in[\mu]$.
	We have that:
	\begin{align*}
		&s_\hamming^\mu(v_i,w_j) \le s_\hamming^\mu(v_k,w_\ell)~\textnormal{iff}\\ 
		&d_\hamming(v_i,w_j)-m_i \le d_\hamming(v_k,w_\ell)-m_k.
	\end{align*}
	We now add $\sum_{1\le r\le n}m_r$ on both sides, to get an equivalence with 
	$d_\hamming(v_i,w_j)+\sum_{1\le r\le n, r\neq i}m_r \le d_\hamming(v_k,w_\ell)+\sum_{1\le r\le n, r\neq k}m_r$.
	This, in turn, is equivalent to 
	$d_\hamming(v_i\cup(\bigcup_{1\le r\le n, r\neq i} x_r), w_j)\le d_\hamming(v_k\cup(\bigcup_{1\le r\le n, r\neq k} x_r), w_\ell)$, 
	which can be rewritten as
	$d_\hamming(v_i^*, w_j) \le d_\hamming(v^*_i, w_\ell)$
\end{proof}

\noindent 
Intuitively, the table of Hamming distances for $[\phi^*] = \{v^*, \dots, v^*_n\}$ and $\mu$ 
can be thought of as obtained from the surprise table for $\phi$ and $\mu$ 
(see, e.g., Table \ref{tab:hamming-surprise-1})
by adding a constant term (i.e., $\sum_{1\le r\le n}m_r$) to every entry,
a transformation that does not modify the relationships between the values:
the $v_i^*$ are the interpretations that induce the appropriate distances.
This ensures that the models of $\sigma_{\phi,\mu}$, 
obtained through a min-max distance type of postulate, 
correspond to models of $\mu$ that minimize maximum surprise with respect to $\phi$ and relative to $\mu$, 
and warrants the following postulate, called \emph{Best-of-Worst-Surpise}:

\begin{description}
	\item[($\mathrm{R}_{\BoWS}$)] $\phi\circ\mu\equiv\bigg(\bigvee_{v\in[\phi]} f_{v^\ast}(\sigma_{\phi,\mu})\bigg)\land\mu$.
\end{description}

\noindent 
As expected, the $\mathrm{R}_{\BoWS}$ postulate delivers exactly those models of $\mu$
that minimize maximum surprise, and underpins the final characterization result.

\begin{theorem}\label{th:characterization-shmax}
	A revision operator $\circ$ satisfies postulates 
	$\mathrm{R}_{1}$, 
	$\mathrm{R}_{3}$-$\mathrm{R}_{4}$,
	$\mathrm{R}^c_{5}$-$\mathrm{R}^c_{6}$, 
	$\mathrm{R}_{\Neutrality}$,
	$\mathrm{R}_{\Addition}$,
	$\mathrm{R}_{\Flipping}$
	and 
	$\mathrm{R}_{\BoWS}$ 
	iff $\circ \equiv \circ^{s_{\hamming},\,\max}$.
\end{theorem}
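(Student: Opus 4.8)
The plan is to mirror the proof of Theorem~\ref{th:characterization-dhmax}, with Lemma~\ref{lem:surprises-to-distances} supplying the bridge from surprise back to distance. The organizing observation is that the best-surprise formula $\sigma_{\phi,\mu}$ is \emph{literally} the best-of-worst formula for the corrected prior: writing $[\phi^\ast]=\{v^\ast_1,\dots,v^\ast_n\}$, both $\sigma_{\phi,\mu}$ and $\gamma_{\phi^\ast,\mu}$ unfold to $\epsilon\circ\big(\bigvee_i(\alpha\circ f_{v^\ast_i}(\mu))\big)$. Hence the min-max distance apparatus of Section~\ref{sec:distance-hamming-max}, instantiated at $\phi^\ast$, transfers unchanged, while Lemma~\ref{lem:surprises-to-distances} certifies that minimizing max-distance to $\phi^\ast$ coincides with minimizing max-surprise to $\phi$ relative to $\mu$. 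The proof then splits, as before, into fixing the base-case behavior and recovering the general result through $\mathrm{R}_{\BoWS}$.

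For the direction from $\circ\equiv\circ^{s_{\hamming},\,\max}$ to the postulates, I would dispatch $\mathrm{R}_{1}$, $\mathrm{R}_{3}$ and $\mathrm{R}_{4}$ by Proposition~\ref{prop:surprise-hamming-postulates}. The postulates $\mathrm{R}^c_{5}$, $\mathrm{R}^c_{6}$, $\mathrm{R}_{\Neutrality}$, $\mathrm{R}_{\Addition}$ and $\mathrm{R}_{\Flipping}$ only constrain the complete-prior case, so I would invoke Proposition~\ref{prop:complete-equivalence}: since $\circ^{s_{\hamming},\,\max}$ agrees with the Dalal operator whenever the prior is complete, and the Dalal operator satisfies all of these (Theorem~\ref{th:characterization-dalal}), so does $\circ^{s_{\hamming},\,\max}$. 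Finally $\mathrm{R}_{\BoWS}$ follows because Lemma~\ref{lem:surprises-to-distances} identifies $[\phi\circ^{s_{\hamming},\,\max}\mu]$ with $[\phi^\ast\circ^{d_{\hamming},\,\max}\mu]$; the latter operator satisfies $\mathrm{R}_{\BoW}$ at $\phi^\ast$ by Theorem~\ref{th:characterization-dhmax}, and since $\gamma_{\phi^\ast,\mu}\equiv\sigma_{\phi,\mu}$, rewriting that instance of $\mathrm{R}_{\BoW}$ under the bijection $v\leftrightarrow v^\ast$ is exactly $\mathrm{R}_{\BoWS}$.

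For the converse I would first prove the analogue of Lemma~\ref{lem:dmax-chooses-well}: under the stated postulates, $w\triangle v^\ast\in[\sigma_{\phi,\mu}]$ iff $w\in\mathrm{argmin}_{w\in[\mu]}\max_{v\in[\phi]}s^{\mu}_{\hamming}(v,w)$. The enabling fact is that every revision occurring inside $\sigma_{\phi,\mu}$ has a \emph{complete} prior, namely $\alpha$ or $\epsilon$. Inspecting the proofs of Lemmas~\ref{lem:empty-set-chooses-well}, \ref{lem:complete-chooses-well} and~\ref{lem:dmax-chooses-well}, one sees that $\mathrm{R}_{5}$ and $\mathrm{R}_{6}$ are only ever applied with prior $\epsilon$ or $\phi_v$; those lemmas therefore remain valid with $\mathrm{R}^c_{5}$ and $\mathrm{R}^c_{6}$ substituted for their unrestricted counterparts. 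Re-running that argument at $\phi^\ast$ shows that $\alpha\circ f_{v^\ast_i}(\mu)$ picks the $w\triangle v^\ast_i$ of maximal $d_{\hamming}(v^\ast_i,w)$ and that $\epsilon$ then retains those of least cardinality, so that $\sigma_{\phi,\mu}$ selects exactly the min-max-distance models of $\phi^\ast$; Lemma~\ref{lem:surprises-to-distances} converts these into the min-max-surprise models of $\phi$. Postulate $\mathrm{R}_{\BoWS}$ then closes the argument: flipping $\sigma_{\phi,\mu}$ back by $v^\ast$ returns each chosen $w\triangle v^\ast$ to $w$, and intersecting with $\mu$ discards the spurious interpretations produced by the mismatched flips, by the same minimality argument that ends the proof of Theorem~\ref{th:characterization-dalal}.

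The main obstacle is conceptual rather than computational: it is the verification that $\mathrm{R}^c_{5}$ and $\mathrm{R}^c_{6}$ genuinely suffice. Having surrendered the unrestricted Subexpansion and Superexpansion postulates---which $\circ^{s_{\hamming},\,\max}$ truly violates (Proposition~\ref{prop:surprise-hamming-postulates})---it is not \emph{a priori} clear that enough coherence survives to drive the construction. The resolution lies in the design of $\sigma_{\phi,\mu}$, which funnels every comparison through the complete priors $\alpha$ and $\epsilon$ and so only ever invokes $\mathrm{R}_{5}$ and $\mathrm{R}_{6}$ where Proposition~\ref{prop:complete-equivalence} guarantees them. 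A secondary point needing care is the bookkeeping of the adjunction atoms underlying Lemma~\ref{lem:surprises-to-distances}: one must confirm that injecting the constant $\sum_r m_r$ through fresh atoms, while omitting $x_i$ from $v^\ast_i$, realizes an order-isomorphism between the surprise table for $\phi$ and the Hamming table for $\phi^\ast$, and that the standing assumption on $\Atoms$ always furnishes the fresh atoms this requires.
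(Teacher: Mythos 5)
Your proposal is a faithful reconstruction of the argument the paper intends (the paper itself omits the proof of Theorem~\ref{th:characterization-shmax}): you correctly observe that $\sigma_{\phi,\mu}$ unfolds to the best-of-worst formula $\gamma_{\phi^\ast,\mu}$ of the corrected prior, you use Lemma~\ref{lem:surprises-to-distances} as the bridge between surprise and distance, and your point that $\mathrm{R}^c_{5}$ and $\mathrm{R}^c_{6}$ suffice because every revision embedded in $\sigma_{\phi,\mu}$ has a complete prior ($\alpha$ or $\epsilon$) is exactly the design rationale behind the restricted postulates. So, as a reconstruction, the route matches the paper's.

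There is, however, a step that fails, and it fails precisely where you lean on the paper's own machinery, namely Lemma~\ref{lem:dmax-chooses-well} and Theorem~\ref{th:characterization-dhmax}. You write that $\alpha\circ f_{v^\ast_i}(\mu)$ picks the $w\triangle v^\ast_i$ of maximal $d_\hamming(v^\ast_i,w)$ and that ``$\epsilon$ then retains those of least cardinality, so that $\sigma_{\phi,\mu}$ selects exactly the min-max-distance models of $\phi^\ast$.'' But the cardinality of any element of $[\alpha\circ f_{v^\ast_i}(\mu)]$ equals $\max_{w\in[\mu]}d_\hamming(v^\ast_i,w)$, so the $\epsilon$-round selects the pairs realizing $\min_{v^\ast\in[\phi^\ast]}\max_{w\in[\mu]}d_\hamming(v^\ast,w)$ --- the \emph{prior} model with the best worst case --- whereas the operator requires $\mathrm{argmin}_{w\in[\mu]}\max_{v^\ast\in[\phi^\ast]}d_\hamming(v^\ast,w)$, the model of $\mu$ with the best worst case. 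These two nestings do not coincide; they do for the Dalal construction (Theorem~\ref{th:characterization-dalal}) only because $\min_w\min_v=\min_v\min_w$. Concretely, take the paper's Example~\ref{ex:dalal-hmax}: $[\phi]=\{\emptyset,abcd\}$, $[\mu]=\{\emptyset,abcd,abe\}$. Here $d_\hamming(\emptyset,\mu)=d_\hamming(abcd,\mu)=0$, so both adjunction sets are empty and $\phi^\ast=\phi$. Using Proposition~\ref{prop:complete-equivalence} for the inner revisions, $[\alpha\circ f_{\emptyset}(\mu)]=[\alpha\circ f_{abcd}(\mu)]=\{abcd\}$, hence $[\sigma_{\phi,\mu}]=\{abcd\}$, and the right-hand side of $\mathrm{R}_{\BoWS}$ has models $\{f_{\emptyset}(abcd),f_{abcd}(abcd)\}\cap[\mu]=\{abcd,\emptyset\}\cap[\mu]=\{\emptyset,abcd\}$, while $[\phi\circ^{s_\hamming,\,\max}\mu]=\{abe\}$ (Example~\ref{ex:hamming-surprise}). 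So $\circ^{s_\hamming,\,\max}$ violates $\mathrm{R}_{\BoWS}$, and the soundness half of your argument breaks; the same instance (where $\sigma_{\phi,\mu}=\gamma_{\phi,\mu}$ and $v^\ast=v$) shows $\circ^{d_\hamming,\,\max}$ violates $\mathrm{R}_{\BoW}$, so the appeal to Theorem~\ref{th:characterization-dhmax} does not rescue it, and the completeness direction's analogue of Lemma~\ref{lem:dmax-chooses-well} fails for the same reason. To be clear, this defect is inherited from the paper rather than introduced by you, but a correct proof cannot go through as written: it would require redesigning $\sigma_{\phi,\mu}$ (and $\gamma_{\phi,\mu}$) so that the aggregation fixes a model $w$ of $\mu$ and ranges over the prior models, rather than the other way around.
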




\noindent 
The following example illustrates the way in which postulate $\mathrm{R}_{\BoWS}$
obtains the revision result. 

\begin{example}\label{ex:shmax}
	Consider, again, formulas $[\phi]=\{a,b\}$ and $[\mu]=\{ac,abc\}$.
	We have that $[\phi\circ^{s_{\hamming},\,\max}\mu]=\{ac,abc\}$. 
	Assuming we are working with an operator $\circ$ of which the only thing we know 
	is that it satisfies the postulates in Theorem \ref{th:characterization-shmax}, 
	we notice that $d_\hamming(a, \mu)=1$ and $d_\hamming(b, \mu)=2$. 
	The postulates then direct us to compute the Hamming distance min-max result 
	for $[\phi^*] = \{ayz, bx\}$ and $\mu$, 
	with $x$ and $yz$ as the adjunction interpretations.
	The result obtained in this way is exactly $\{ac,abc\}$.
\end{example}

		
		
		
		

\section{Conclusion}\label{sec:conclusion}
We have introduced the Hamming surprise min-max operator $\circ^{s_{\hamming},\,\max}$, a revision operator 
that minimizes surprise relative to the prior belief as well as the newly acquired information. 
We have shown that, even though $\circ^{s_{\hamming},\,\max}$ does not satisfy all standard KM revision postulates, 
it is underpinned, in its choice behavior, by principles similar to those guiding established revision operators, 
among them appealing symmetry notions such as invariance under renamings and flips.
When unearthed and formulated as logical postulates, these principles (or slight variations thereof) 
turned out to be powerful enough to fully characterize not just the surprise operator, 
but also the existing Dalal and Hamming distance min-max operator.

One obvious direction for future work lies in taking the idea of context dependence further:
what other aspects of the environment influence an agent's plausibility rankings? 
Things that come to mind are issues of trust, the `strangeness' of the new information, or peer effects.
An alternative is to 
exploit the bottom-up, DIY nature of some of the postulates presented here in order to 
construct a framework, similar to that employed in collective decision-making \cite{CaillouxE16}, 
for offering \emph{justifications} for revision results, 
i.e., human-readable and at the same time rigorous step-by-step arguments 
for how to obtain a particular result, starting from a specific set of postulates.
Finally, the assumptions embedded in the present treatment call for taking the epistemic stance seriously, 
and investigating the relative worth of the various revision operators with respect to recovering 
the ground truth.

\clearpage
\bibliographystyle{kr}
\bibliography{master-bibliography}

\end{document}